\documentclass[a4paper,UKenglish]{lipics-v2021}

\hideLIPIcs 
 
\nolinenumbers

\setlength{\marginparwidth}{4cm}

\usepackage{graphicx}
\usepackage{amsmath,amsfonts,amsthm,amssymb,complexity,mathtools}
\usepackage{stmaryrd}
\usepackage[algoruled, vlined, linesnumbered]{algorithm2e}
\usepackage{enumerate,enumitem}

\usepackage{tabularx, environ}
\usepackage{tcolorbox}
\usepackage{makecell,booktabs}
\usepackage{tablefootnote}
\usepackage[capitalise]{cleveref}
\usepackage{mdframed}
\usepackage{xpatch}
\usepackage{multirow, makecell}
\usepackage{nicefrac, xfrac}
\usepackage{xcolor,url}

\usepackage{tikz}
\usetikzlibrary{shapes}
\usetikzlibrary{arrows,arrows.meta,automata, positioning,calc, patterns,decorations.markings}

\newcommand{\yesinstance}{YES-instance\xspace}
\newcommand{\yesinstances}{YES-instances\xspace}

\newcommand{\N}{\mathbb{N}}

\renewcommand{\P}{\mathbb{P}} 

\DeclareRobustCommand{\rchi}{{\mathpalette\irchi\relax}}
\newcommand{\irchi}[2]{\raisebox{\depth}{$#1\chi$}}

\newcommand{\F}{{\cal F}}

\newcommand{\bigoh}{{\mathcal O}}

\newcommand{\set}[1]{{\{#1\}}}

\newcommand{\wone}{\textup{\W[1]}\xspace}

\newcommand{\nphard}{\NP-hard\xspace}
\newcommand{\npcomplete}{\NP-complete\xspace}

\newcommand{\nphardness}{\NP-hardness\xspace}

\newcommand{\exreals}{\ensuremath{\exists \mathbb{R}}\xspace}

\newcommand{\exrealscomplete}{\exreals-complete\xspace}

\newcommand{\wonehard}{\wone-hard\xspace}

\newcommand{\threeSAT}{\textsc{3-SAT}\xspace}
\newcommand{\MCC}{\textsc{\(k\)-Multicolored-Clique}\xspace}
\newcommand{\etrc}{\(\textsc{ETR}^{\sfrac{1}{8}, +, \times}_{[-\sfrac{1}{8}, \sfrac{1}{8}]}\)\xspace}

\DeclareMathOperator{\tw}{tw}
\newcommand{\True}{TRUE\xspace}

\newcommand{\infobox}[1]{%
      \noindent
      \begin{center}
      \begin{tcolorbox}
            #1
      \end{tcolorbox}
      \end{center}
      }

\makeatletter
\newcommand{\problemtitle}[1]{\gdef\@problemtitle{#1}}
\newcommand{\probleminput}[1]{\gdef\@probleminput{#1}}
\newcommand{\problemtask}[1]{\gdef\@problemtask{#1}}
\NewEnviron{problem}{
\problemtitle{}\probleminput{}\problemtask{}
\BODY
\noindent
\begin{tabularx}{\textwidth}{
	 						l X c}
	\multicolumn{2}{
						l}{\@problemtitle} \\
	\textbf{Input:} & \@probleminput \\
	\textbf{Task:} & \@problemtask
\end{tabularx}
}
\NewEnviron{problem_compact}{
\problemtitle{}\probleminput{}\problemtask{}
\BODY
\noindent
\begin{tabularx}{\textwidth}{
	 						l r X c}
\hspace{-2mm}\@problemtitle & \textbf{Input:} & \@probleminput \\
                     &  \textbf{Task:} & \@problemtask \\
\end{tabularx}
}
\makeatother

\makeatletter
\xpatchcmd{\endmdframed}
  {\aftergroup\endmdf@trivlist\color@endgroup}
  {\endmdf@trivlist\color@endgroup\@doendpe}
  {}{}
\makeatother

\surroundwithmdframed[linecolor=black,linewidth=1pt,skipabove=1mm,skipbelow=1mm,leftmargin=0mm,rightmargin=0mm,innerleftmargin=1mm,innerrightmargin=1mm,innertopmargin=1mm,innerbottommargin=1mm]{problem}
\surroundwithmdframed[linecolor=black,linewidth=1pt,skipabove=1mm,skipbelow=1mm,leftmargin=0mm,rightmargin=0mm,innerleftmargin=1mm,innerrightmargin=1mm,innertopmargin=1mm,innerbottommargin=1mm]{problem_compact}

\definecolor{cb_orange}{rgb}{0.859 0.427 0.0}
\definecolor{cb_blue}{rgb}{0.0 0.427 0.859}
\definecolor{cb_violet}{rgb}{0.714 0.427 1.0}
\definecolor{cb_dark_seal}{rgb}{0.0 0.286 0.286}
\definecolor{cb_seal}{rgb}{0.0 0.573 0.573}
\definecolor{cb_pink}{rgb}{1.0 0.427 0.714}
\definecolor{cb_rose}{rgb}{1.0 0.714 0.859}
\definecolor{cb_purple}{rgb}{0.286 0.0 0.573}
\definecolor{cb_light_blue}{rgb}{0.427 0.714 1.0}
\definecolor{cb_vlight_blue}{rgb}{0.714 0.859 1.0}
\definecolor{cb_red}{rgb}{0.573 0.0 0.0}
\definecolor{cb_brown}{rgb}{0.573 0.286 0.0}
\definecolor{cb_green}{rgb}{0.141 1.0 0.141}
\definecolor{cb_yellow}{rgb}{1.0 1.0 0.427}

\newcommand{\allV}{\ensuremath{\mathbf{V}}\xspace}
\newcommand{\allU}{\ensuremath{\mathbf{U}}\xspace}
\newcommand{\gV}{\ensuremath{\mathcal{V}}\xspace} 
\newcommand{\gF}{\ensuremath{\mathcal{F}}\xspace} 
\newcommand{\gM}{\ensuremath{\mathcal{M}}\xspace} 
\newcommand{\gE}{\ensuremath{\mathcal{E}}\xspace} 
\newcommand{\gL}{\ensuremath{\mathcal{L}}\xspace} 

\newcommand{\barv}{\ensuremath{\overline{v}}\xspace}
\newcommand{\baru}{\ensuremath{\overline{u}}\xspace}

\newcommand{\range}{\ensuremath{\mathrm{Val}}}
\newcommand{\maxrange}{\ensuremath{d}}
\newcommand{\prob}{\mathsf{prob}}
\newcommand{\causal}{\mathsf{causal}}
\newcommand{\counterfact}{\mathsf{counterfact}}
\newcommand{\comp}{\mathsf{base}}
\newcommand{\lin}{\mathsf{lin}}
\newcommand{\linsum}{\lin {\langle\Sigma\rangle}}
\newcommand{\polysum}{\poly {\langle\Sigma\rangle}}
\newcommand{\basesum}{\base {\langle\Sigma\rangle}}
\newcommand{\base}{\mathsf{base}}

\newcommand{\eprop}{\ensuremath{\gE_{\mathsf{prop}}}}
\newcommand{\eint}{\ensuremath{\gE_{\mathsf{int}}}}
\newcommand{\efull}{\ensuremath{\gE_{\mathsf{post\text{-}int}}}}
\newcommand{\ecounterfact}{\ensuremath{\gE_{\mathsf{counterfact}}}}
\newcommand{\lprob}{\ensuremath{\gL_{\mathsf{prob}}}}
\newcommand{\lcausal}{\ensuremath{\gL_{\mathsf{causal}}}}
\newcommand{\lcounterfact}{\ensuremath{\gL_{\mathsf{counterfact}}}}
\newcommand{\tcomp}{\ensuremath{T_{\comp}(\gE)}}

\newcommand{\tlin}{\ensuremath{T_{\lin}(\gE)}}
\newcommand{\tpoly}{\ensuremath{T_{\poly}(\gE)}}

\newcommand{\bft}{\mathbf{t}}

\newcommand{\cSAT}[2]{\ensuremath{\textsc{SAT}_{#1}^{#2}}\xspace}
\newcommand{\probcomp}{\cSAT{\prob}{\comp}}
\newcommand{\generalSAT}{\cSAT{\circledast}{\ast}}
\newcommand{\arbSAT}{\ensuremath{\mathrm{arb}\probcomp}}

\newcommand{\citep}[1]{\cite{#1}}
\newcommand{\citet}[1]{\cite{#1}}

\title{Gateways to Tractability for Satisfiability \\
in Pearl's Causal Hierarchy
}

\titlerunning{Gateways to Tractability for Satisfiability in Pearl's Causal Hierarchy}  
\author{Robert Ganian}{TU Wien, Algorithms and Complexity Group, Austria}{rganian@gmail.com}{https://orcid.org/0000-0002-7762-8045}{}  
\author{Marlene Gründel}{TU Wien, Algorithms and Complexity Group, Austria}{mgruendel@ac.tuwien.ac.at}{https://orcid.org/0000-0003-3470-1326}{}
 \author{Simon Wietheger}{TU Wien, Algorithms and Complexity Group, Austria}{swietheger@ac.tuwien.ac.at}{https://orcid.org/0000-0002-0734-0708}{}

 \authorrunning{R.\ Ganian, M.\ Gründel, S.\ Wietheger}   

 \ccsdesc[500]{Theory of computation~Design and analysis of algorithms}
\keywords{Pearl’s Causal Hierarchy, causal reasoning, parameterized complexity}

\begin{document}

\maketitle

\begin{abstract}
Pearl’s Causal Hierarchy (PCH) is a central framework for reasoning about probabilistic, interventional, and counterfactual statements, yet the satisfiability problem for PCH formulas is computationally intractable in almost all classical settings. We revisit this challenge through the lens of parameterized complexity and identify the first gateways to tractability. Our results include fixed-parameter and \XP-algorithms for satisfiability in key probabilistic and counterfactual fragments, using parameters such as primal treewidth and the number of variables, together with matching hardness results that map the limits of tractability. Technically, we depart from the dynamic programming paradigm typically employed for treewidth-based algorithms and instead exploit structural characterizations of well-formed causal models, providing a new algorithmic toolkit for causal reasoning.
\end{abstract}
		
\section{Introduction}
\label{sec:intro}
Pearl's Causal Hierarchy (PCH)~\citep{ShpitserP08,pearlbook,
BareinboimCII22} is a central pillar of the modern theory of causality that
is employed in artificial intelligence and other reasoning tasks---see,
e.g., the recent book~\citep{PearlM20} on the topic.
The PCH is a framework that has three basic layers of depth which capture three fundamental degrees of sophistication for analyzing causal effects and relationships.
All of these layers provide a means of formalizing statements via formulas capturing the behavior of a set of probabilistic variables in a \emph{Structural Causal Model} (SCM)~\citep{discoveringbook,pearlbook,pgmbook,Elwert2013}, 
which is a well-established representation of systems with observed as well as hidden variables over a specified domain and their mutual dependencies.
As a basic illustrative example, the statement ``the likelihood of having both diabetes ($D=\text{yes}$) and blood type B+ ($T=\text{B+}$) is at most 1\%'' can be expressed by the formula $\psi=\Pr(D=\text{yes}\wedge T=\text{B+})\leq 0.01$.

The formula $\psi$ above belongs to the first, basic layer of the PCH---that is, the layer $\lprob$ of probabilistic reasoning that captures direct statements one can make about the probabilities of certain outcomes. The second layer, $\lcausal$, expands on the basic probability terms in $\lprob$ via the introduction of Pearl's do-operator~\citep{pearlbook} 
which captures interventional causal reasoning. 
A basic example of an event that can be captured on this layer of the PCH is contracting a disease after being vaccinated against that disease; the probability of this event can be expressed using the term $\Pr([Y=\text{vaccinated}]~X=\text{contracted})$\footnote{Equivalently, $\Pr(X=\text{contracted}~|~do(Y=\text{vaccinated}))$. We follow recent publications in the area~\citep{ZanderBL23,DorflerZBL25} and primarily employ the square-bracket notation.}, 
where the square brackets denote an intervention that is applied before observing the outcome.\footnote{Interventions are distinct from conditional probability statements such as $\Pr(X=\text{contracted}~|~Y=\text{vaccinated})$. To see this, consider a hypothetical world where the vaccine is ineffective, the disease only exists in a laboratory and an oracle randomly determines whether a person will be infected without vaccination, or receive the vaccine and not come in contact with the disease. In this world, $\Pr(X=\text{contracted}~|~Y=\text{vaccinated})=0$ but $\Pr([Y=\text{vaccinated}]~X=\text{contracted})>0$.}
Hence, the second layer of the PCH allows us to make statements such as $\Pr([Y=\text{vaccinated}]~X=\text{contracted})< \Pr(X=\text{contracted})$.
The third layer $\lcounterfact$ of the PCH expands on $\lcausal$ by allowing interventions to be chained, and enables complex statements related to counterfactual situations. For instance, a third-layer term such as $\Pr\big(([M=\text{yes}]~H=\text{no})~|~(M=\text{no} \wedge H=\text{yes})\big)$ can express the probability that a patient who did not take medication ($M$) and was hospitalized ($H)$ would have avoided hospitalization if he had taken the medication.
Formal definitions of these as well as related notions are available in \cref{sec:prelims}.

While the three layers of depth of the PCH focus on the expressivity inside the probability term $\Pr(\cdot)$, there is a second dimension to the PCH---specifically, the \emph{breadth} of operations that can be applied to the probability terms themselves. For $\circledast \in \{\prob, \causal, \counterfact\}$, we distinguish the following fragments of the PCH:
\begin{itemize}
\item $\gL_\circledast^\comp$: only simple probability terms are allowed, such as $\Pr(\cdot) \leq \Pr(\circ)$ or $\Pr(\cdot)\geq 1$;
\item $\gL_\circledast^\lin$: linear combinations of probability terms are allowed, such as $\Pr(\cdot) - \Pr(\circ) \geq 3\Pr(\bullet)$;
\item $\gL_\circledast^\poly$: polynomials over probability terms are allowed, such as $\Pr(\cdot)^2\leq  2\Pr(\circ)\cdot \Pr(\bullet)+0.1$.
\end{itemize}

Crucially, the various combinations of depth and breadth give rise to a $3\times 3$ \emph{expressivity matrix} $M$ for PCH~\cite[Table 1]{DorflerZBL25},~\cite[Table 1]{ZanderBL23}. 

A crucial and well-studied problem in the setting of causal reasoning is \textsc{Satisfiability}---that is, determining whether a given formula (consisting of a set of probability constraints) admits an SCM~\citep{FaginHM90,IbelingI20,ZanderBL23,MosseII24,DorflerZBL25}. 
 We note that there is a high-level parallel between this \textsc{Satisfiability} problem in the causal setting and the well-known \textsc{Boolean Satisfiability} (\textsc{SAT}) and \textsc{Constraint Satisfaction} (\textsc{CSP}) problems; the distinction lies in the types of constraints on the input and the nature of the sought-after model.
However, solving \textsc{Satisfiability} in our causal reasoning setting is, in general, a much more daunting task. If we let $\cSAT{\circledast}{\ast}$ denote the \textsc{Satisfiability} problem for formulas from the fragment $\gL_{\circledast}^{\ast}$ of the PCH, then depending on the choice of $\circledast\in \{\prob, \causal, \counterfact\}$ and $\ast \in \{\base, \lin, \poly\}$ the problem under consideration will be complete for the complexity classes $\NP$ or $\exreals$---see also the detailed discussion of related work at the end of this section.

Crucially, while previous works have made significant strides towards mapping the classical complexity landscape of the \textsc{Satisfiability} problem, even the ``easiest'' fragments of the expressivity matrix remain \NP-hard. The central aim of this article is to provide a counterweight to this pessimistic perspective and identify fundamental gateways to tractability for \textsc{Satisfiability}, specifically by employing the more refined \emph{parameterized complexity} paradigm~\citep{DowneyF13,CyganFKLMPPS15}. 
There, one analyzes the running time of algorithms not only in terms of the input size $|I|$, but also with respect to a specified numerical parameter $k$. The standard notion of tractability used in this setting is then tied to algorithms which run in time $f(k)\cdot |I|^{\bigoh(1)}$ for some computable function $f$; problems admitting such \emph{fixed-parameter} algorithms are called \emph{fixed-parameter tractable} ($\FPT$). 
A weaker---but nevertheless still useful---notion of tractability stems from the existence of a so-called \emph{\XP-algorithm}, i.e., an algorithm running in time $|I|^{f(k)}$ (this gives rise to the complexity class \XP).
Our main results include not only the first fixed-parameter and \XP-algorithms for the problem, but also matching lower bounds which allow us to identify the limits of parameterized tractability in the expressivity matrix.

\paragraph*{Contributions}
A loose inspiration for this work stems from the success stories in the aforementioned domains of \textsc{Boolean Satisfiability} and \textsc{Constraint Satisfaction}. The parameterized complexity of these two problems is by now very well understood, and perhaps the most classical parameterized algorithms use the \emph{primal treewidth} as the parameter of choice. Essentially, this measures how ``tree-like'' the interactions between the variables in the instance are---more precisely, this is captured by measuring the \emph{treewidth},
a fundamental graph parameter~\citep{RobertsonS84}, of the graph obtained by representing variables as vertices and using edges to capture the property of lying in the same ``term'' (i.e., clause or constraint).
In particular, it is known that \textsc{Boolean Satisfiability} is fixed-parameter tractable w.r.t.\ the primal treewidth~\cite[Chapter 13]{biere2009handbook}, while \textsc{Constraint Satisfaction} admits an \XP-algorithm under the same parameterization~\citep{SamerS10}; the latter then becomes fixed-parameter tractable when the parameterization also includes the domain size for the variables~\citep{SamerS10}.

Given the above, it is natural to ask whether one can use the primal treewidth to establish tractability for \textsc{Satisfiability} in the PCH setting. As our first set of contributions, we provide a complete answer to this question: 

\infobox{
\begin{tabular}{l@{\hspace{.5em}}l}
$\cSAT{\prob}{\lin}$ is & \textbf{(1)} in \XP\ w.r.t.\ the primal treewidth alone, and \\ \vspace*{7px}
& \textbf{(2)} fixed-parameter tractable w.r.t. the primal treewidth plus the domain size $d$. \\ 
Moreover, under well-established complexity assumptions one can neither \span  \\
& \textbf{(3)} improve the \XP-tractability to \FPT\ (not even for 
$\cSAT{\prob}{\base}$),
nor \\
& \textbf{(4)} lift \textbf{any} of these tractability results to $\cSAT{\prob}{\poly}$ or 
$\cSAT{\causal}{\lin}$.
\end{tabular}
}

Furthermore, we remark that parameterizing by the domain size alone does not yield tractability under well-established complexity assumptions (see Theorem~\ref{thm:baseprob_hard}). 

While the above results are comprehensive, they only provide a gateway to tractability for the ``shallowest'' probabilistic fragment of the PCH. We hence ask whether one can achieve tractability for deeper fragments of the PCH (that is, $\gL_{\causal}^{\ast}$ and $\gL_{\counterfact}^{\ast}$) if the primal treewidth is replaced with a more restrictive parameterization---specifically the number $n$ of variables in the formula. We note that the analogous question in the \textsc{Boolean Satisfiability} and \textsc{Constraint Satisfaction} setting is trivial: there, asymptotically optimal (under well-established complexity assumptions) algorithms parameterized by $n$ can merely enumerate all possible models~\citep{ImpagliazzoPZ01,SMPS24}.
Such an approach is doomed to fail for the causal \textsc{Satisfiability} problem: not only will an SCM contain (potentially many) auxiliary random variables, but also variable dependencies and random distributions that cannot be exhaustively enumerated.

As our second set of contributions, we map the complexity landscape for 
deeper fragments of the PCH as well:

\infobox{
\begin{tabular}{l@{\hspace{.5em}}l}
\vspace*{7px} $\cSAT{\counterfact}{\lin}$ is 
& \textbf{(5)} fixed-parameter tractable w.r.t. $n$ plus the domain size $d$. \\
Moreover, under well-established complexity assumptions one can neither \span  \\
& \textbf{(6)} drop any of the parameters while preserving fixed-parameter \\ & \phantom{\textbf{(7)}} tractability (not even for $\cSAT{\prob}{\base}$), nor \\
& \textbf{(7)} lift the fixed-parameter tractability to $\cSAT{\counterfact}{\poly}$.\\ 
\end{tabular}
}
A schematic overview of our contributions is provided in the mind map below (Figure~\ref{fig:mindmap}).

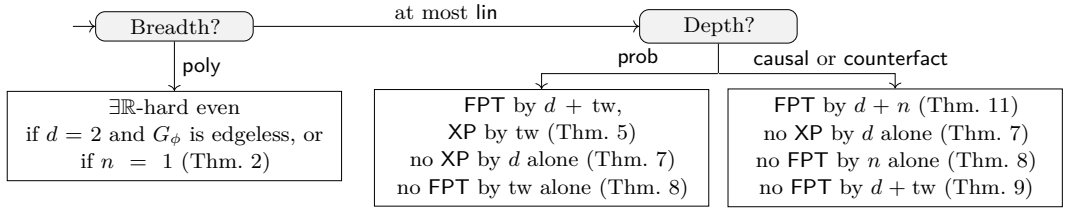
\begin{figure}[ht]
    \tikzstyle{decision} = [rectangle, rounded corners, draw, fill=gray!10, 
    text width=2.1cm, text centered, minimum height=1em]
    \tikzstyle{block} = [rectangle, draw, fill=white!20, 
        text width=4.7cm, text centered, minimum height=1em]
    \tikzstyle{line} = [draw, ->, >=Latex, line width=1pt, scale=2]

        \centering
        \resizebox{\textwidth}{!}{
        \begin{tikzpicture}[auto]
        \small
        \centering

        \node [decision] (breadth) {Breadth?};

        \node [decision,right of=breadth,node distance=8cm] (depth) {Depth?};
        
        \node [block, below of=breadth, node distance=1.625cm] (polyhard) {\exreals-hard even\\ 
             if $\maxrange=2$ and $G_\phi$ is edgeless, or\\
            if $n = 1$ (Thm.~\ref{thm:poly_hard})};

        \node [block, below of=depth, node distance=1.8cm, xshift=-2.6cm] (prob) {
            $\FPT$ by $\maxrange + \tw$, $\XP$ by $\tw$ (Thm.~\ref{thm:tw_lin})\\
            no $\XP$ by $d$ alone (Thm.~\ref{thm:baseprob_hard})\\
            no $\FPT$ by $\tw$ alone (Thm.~\ref{thm:w_hard})};

      \node [block, below of=depth, node distance=1.8cm, xshift=2.6cm] (causal) {
            $\FPT$ by $\maxrange + n$ (Thm.~\ref{thm:var_causal})\\
            no $\XP$ by $d$ alone (Thm.~\ref{thm:baseprob_hard})\\
            no $\FPT$ by $n$ alone (Thm.~\ref{thm:w_hard})\\            
                        no $\FPT$ by $d+\tw$ (Thm.~\ref{thm:causal_hard})};

      \node [below of=depth,node distance=.7cm,inner sep=0pt, minimum size=0pt] (dummy) {};

      \node [left of=breadth,node distance=1.5cm,inner sep=0pt, minimum size=0pt] (start) {};
            
      \draw[->] (breadth) -- node {$\poly$} (polyhard);
      \draw[->] (breadth) -- node {at most $\lin$} (depth);

      \draw (depth) -- (dummy);
      
      \draw[->] (start) -- (breadth);

      \draw[->] (dummy) -| node [xshift=1.cm, yshift=0.23cm]{$\prob$} (prob);
      
      \draw[->] (dummy) -| node [xshift=-2.2cm, yshift=0.24cm]{$\causal$ or $\counterfact$} (causal);
      \end{tikzpicture}}
    \caption{Parameterized complexity of \textsc{Satisfiability} in the PCH based on the position (breadth/depth) in the expressivity matrix $M$. All results hold under well-established complexity assumptions and refer to an instance with $n$ observed variables over a domain of size $d$ such that the treewidth of the primal graph $G_\phi$ is $\tw$. We note that the \exreals-hardness of the $\poly$ fragment was already established by Mossé et al.~\cite{MosseII24}, but not under the stated restrictions which rule out tractability in the parameterized setting.}
    \label{fig:mindmap}
\end{figure}

\paragraph*{Extensions to Marginalization}
In order to efficiently express marginalization, recent works~\citep{ZanderBL23,DorflerZBL25} have extended the classical fragments in $M$ to $\gL_\circledast^{\basesum}$, $\gL_\circledast^{\linsum}$, and $\gL_\circledast^{\polysum}$, respectively; the only difference is that these classes additionally include the unary summation operator $\sum$. 
Depending on the specific fragment considered, including these marginalization operators in the \textsc{Satisfiability} problem yields completeness for the complexity classes \NP$^\mathsf{PP}$, \PSPACE, \NEXP\ or succ-\exreals---see~\cite[Table 1]{DorflerZBL25}. Since the algorithm underlying Result~\textbf{(5)} can also be used to establish inclusion in the complexity class \textsf{EXPTIME} while \cSAT{\counterfact}{\linsum} is \NEXP-complete~\citep{DorflerZBL25}, under well-established complexity assumptions it is not possible to lift our results towards full marginalization operators as considered in the aforementioned works. Nevertheless, if one were to bound the nesting depth of the unary summation operator $\sum$ to any (arbitrary but fixed) constant, all of our results could be directly translated to the marginalization setting by simply expanding on the respective sums.

\paragraph*{Proof Techniques}
The standard approach to establishing tractability for problems parameterized by treewidth is to employ dynamic programming---this is the approach used not only for the aforementioned treewidth-based algorithms solving \textsc{Boolean Satisfiability} and \textsc{Constraint Satisfaction}, but also for almost every algorithm parameterized by treewidth. From a technical standpoint, it is hence surprising that our results \textbf{do not} employ dynamic programming at all; in fact, the \textsc{Satisfiability} problem seems entirely incompatible with the basic tenets of the usual ``leaf-to-root'' dynamic programming paradigm used for treewidth.

Instead, our proof of Results \textbf{(1)} and \textbf{(2)} relies on an entirely novel approach. We first prove that every YES-instance of $\cSAT{\prob}{\lin}$ with primal treewidth $k$ admits a ``well-structured'' SCM whose hidden variables and dependencies can be neatly mapped onto the tree-like structure of the primal graph and determined in advance. However, this step on its own cannot determine whether an SCM actually exists, as for that we need to compute and verify the probability distributions for the hidden variables. In the second step, we use the tree-likeness of the instance once again to construct a ``fixed-parameter sized'' linear program
which either computes a viable set of probability distributions, or determines that none exists. It is well-known that linear programs can be solved in polynomial time~\citep{ppd98}---the difficult part lies in building a program that provably verifies the existence of an SCM while avoiding an exponential dependency on the input size.

In order to apply the reduction technique to Result \textbf{(5)}, we need to be able to deal with the presence of interventions in the formula. Towards this, we argue that every \yesinstance of $\cSAT{\counterfact}{\lin}$ admits an SCM with different structural properties than those used for Result \textbf{(5)} : in particular, 
the value of a single hidden variable $U$ determines not just the value of each observed variable but the \emph{function} of how it is determined from the other observed variables. We then define a suitable linear programming formulation that targets the computation of such well-structured SCMs.

For establishing the lower-bound results, we develop three distinct reductions: one from \MCC which handles \textbf{(3)} and \textbf{(6)}, one from a restricted variant of the Existential Theory of the Reals problem for Results \textbf{(4)} and \textbf{(7)}, and a separate reduction from \threeSAT for the remaining lin-causal case of \textbf{(4)}.

\paragraph*{Related Work}
\cSAT{\prob}{\lin} and \cSAT{\prob}{\comp} were shown to be \NP-complete by Fagin et al.~\cite{FaginHM90}, and analogous results for the fragments \cSAT{\causal}{\lin}, \cSAT{\causal}{\comp}, \cSAT{\counterfact}{\lin}, and \cSAT{\counterfact}{\comp} were obtained by Mossé et al.~\cite{MosseII24}. The \exreals-completeness of \cSAT{\prob}{\poly}, \cSAT{\causal}{\poly} and \cSAT{\counterfact}{\poly} was also established in the latter work~\citep{MosseII24}. As mentioned above, these complexity-theoretic studies were recently extended to languages containing the summation operator $\sum$~\citep{ZanderBL23,DorflerZBL25}. Other related languages designed to express probabilistic reasoning were developed in the works of, e.g., Nilsson~\cite{Nils86}, Georgakopoulos et al.~\cite{GeKP88}, and Ibeling \& Icard \cite{IbelingI20}. Moreover, the existence of solutions to the \textsc{Satisfiability} problem with specific properties has very recently been studied by Bläser et al.~\cite{BlaserDLZ25}.

Beyond \textsc{Satisfiability}, the parameterized complexity paradigm has been employed in several works studying another central problem in the area of causality: \textsc{Bayesian Network Structure Learning}. This line of research was initiated by Ordyniak and Szeider~\cite{OrdyniakS13}, with recent contributions considering a broad range of parameterizations as well as variations of the problem (Ganian \& Korchemna~\cite{GanianK21}, Grüttemeier et al.~\cite{GruttemeierKM21,GruttemeierKM21polyt}, and Grüttemeier \& Komusiewicz~\cite{GruttemeierK22}). The complexity of the related \textsc{Causal Discovery} problem was recently studied by~Ganian et al.~\cite{GanianKS24}.

Beyond the aforementioned prominent applications in \textsc{Boolean Satisfiability} and \textsc{Constraint Satisfaction}, primal treewidth has been used as a natural means of capturing structural properties of inputs in a variety of other settings as well. Examples of this in the broad AI area include its applications in \textsc{Integer Linear Programming}~\citep{GanianOR17,GanianO18}, \textsc{Hedonic Games}~\citep{Peters16a,HanakaL22}, \textsc{Matrix Completion}~\citep{GanianHKOS22}, \textsc{Answer Set Programming}~\citep{FichteH18}, and \textsc{Resource Allocation}~\citep{EibenGHO23}. We note that the treewidth-based algorithms in all of the aforementioned works rely on dynamic programming, which is fundamentally different from the technique employed to achieve our Results \textbf{(1)} and~\textbf{(2)}.

\section{Preliminaries}\label{sec:prelims}
For $n\in \N$, let $[n] = \set{1, \ldots, n}$. For $i_1,i_2 \in \mathbb{R}$, let $[i_1,i_2]=\{j \in \mathbb{R}\mid i_1 \leq j \leq i_2\}$.
We follow established notation as used in \citep{MosseII24,ZanderBL23}.
By \allV we refer to a contingent of random variables and, without loss of generality, assume that each of these share a given domain~$D$ of size~$d$.

\paragraph*{Syntax of the Languages of PCH}
For $V \in \allV$ and $v \in D$, a statement $V=v$ is called an \emph{atom}. 
\emph{Events} over $\allV$ are combinations of atoms following certain grammatical rules:
\begin{align*}
\eprop &::= V=v \mid \neg \eprop \mid \eprop \land \eprop, \\
\eint &::= \top \mid V=v \mid \eint \land \eint,\\
\efull &::= [\eint]~\eprop,\\
\ecounterfact &::= \efull \mid \neg \ecounterfact \mid \ecounterfact \land \ecounterfact .
\end{align*}
We call events in $\eprop$ \emph{propositions} and events in $\eint$ \emph{interventions}.
Each event $\varepsilon$ can only occur within a probabilistic statement $\Pr(\varepsilon)$, called a \emph{term}. The \emph{size of a term} is the number of its atoms. 
For $\gE \in \{\eprop, \efull, \ecounterfact\}$ and $\varepsilon \in \gE$, we define the following ways of combining terms.
\begin{align*}
\tcomp& ::= \Pr(\varepsilon),\\
\tlin& ::= \Pr(\varepsilon) \mid \tlin + \tlin,\\
\tpoly& ::= \Pr(\varepsilon) \mid \tpoly + \tpoly \mid  \tpoly \cdot \tpoly.
\end{align*}

%
Last, for
 $\ast \in \{\base, \lin, \poly\}$
  we define $\lprob^{\ast}$, $\lcausal^{\ast}$, and $\lcounterfact^{\ast}$ to be the \emph{languages} that contain all sets of inequalities over elements in $T_{\ast}(\eprop)$, $T_{\ast}(\efull)$, and $T_{\ast}(\ecounterfact)$, respectively. The elements inside $\smash{\lprob^{\ast}}$, $\smash{\lcausal^{\ast}}$, and $\smash{\lcounterfact^{\ast}}$ are called \emph{formulas}. 
  Note that tautological and contradictory events can be used to encode comparisons against $1$ and $0$, such as $\Pr(\varepsilon)~\le~0$.
Moreover, the grammars of $\smash{\gL_{\circledast}^\lin}$ and $\smash{\gL_{\circledast}^\poly}$ support integer coefficients, which can be effectively constructed by summing up multiple probabilities of the same type. Any inequality with rational coefficients can be encoded by multiplying both sides with the smallest common multiple of all non-integer coefficients.  
At the beginning of the next section, we will compare our syntax to the one used in related work.


\paragraph*{Semantics of the Languages of PCH}
We define the semantics of the aforementioned languages using the notion of Structural Causal Models as popularized by Glymour et al.~\cite{discoveringbook}~and~Pearl~\cite[Section 3.2]{pearlbook}. A~\emph{recursive Structural Causal Model} (\emph{SCM}, or simply \emph{model}) over domain $D$ is a tuple $\gM=(\allV, \allU, \gF, \P)$~with
\begin{itemize}[topsep=0pt]
\item a set $\allV$ of endogenous (observed) variables, implicitly well-ordered by $\prec$, that range over~$D$,
\item a set $\allU$ of exogenous (hidden) variables, 
\item a set $\gF=\{f_V\}_{V \in \allV}$ of functions where $f_V$ specifies how the value of $V$ is determined from $\allU$ and $\allV_{\prec V}$, that is, the subset of $\allV$ that precedes $V \in \allV$ in $\prec$,
\item a probability distribution $\P$ on $\allU$.
\end{itemize}

Any model $\gM$ whose exogenous variables $\allU$ have an infinite or continuous domain is (w.r.t.\ its evaluation) equivalent to a model $\gM'$ where all exogenous variables have discrete and finite domains~\citep{Zhang0B22}. Consequently, we assume throughout that each variable $U \in \allU$ has a discrete and finite domain $\range(U)$, and let $\range(\allU) = \range(U_1) \times \ldots \times \range(U_{|\allU|}) $ refer to their combined range.

Let $V=v$ be an atom in $\eint$. We denote by $\gF_{V=v}$ the set of functions obtained from $\gF$ by replacing $f_V$ with the constant function $v$. We generalize this definition to arbitrary conjunctions of atoms $\gamma\in\eint$ in the natural way and denote the set of resulting functions as $\gF_{\gamma}$. Let $\varepsilon \in \eprop$ and $\overline{u} \in \range(\allU)$. We write $\gF, \overline{u} \models \varepsilon$ if evaluating $\gF$ on input $\overline{u}$ yields an assignment to $\allV$ under which $\varepsilon$ is satisfied. For $[\gamma]~\varepsilon \in \efull$, we write $\gF, \overline{u} \models [\gamma]~\varepsilon$ if $\gF_{\gamma}, \overline{u} \models \varepsilon$. Moreover, for all $\varepsilon, \varepsilon_1, \varepsilon_2 \in \ecounterfact$, we write (i) $\gF, \overline{u} \models \neg \varepsilon$ if $\gF, \overline{u} \not\models \varepsilon$, and (ii) $\gF, \overline{u} \models \varepsilon_1 \land \varepsilon_2$ if both $\gF, \overline{u} \models \varepsilon_1$ and $\gF, \overline{u} \models \varepsilon_2$.
For a given model $\gM=(\allV, \allU, \gF, \P)$, we denote $S_{\gM}(\varepsilon):=\{\overline{u} \in \range(\allU) \mid \gF, \overline{u} \models \varepsilon \}$.
The way $\gM$ interprets an expression $\bft \in \tpoly$ is denoted by $\llbracket \bft \rrbracket_{\gM}$ and recursively defined as follows: $\llbracket \Pr(\varepsilon)\rrbracket_{\gM}= \sum_{\overline{u}\in S_{\gM}(\varepsilon)}\P(\overline{u})$. For two expressions $\bft_1, \bft_2 \in \tpoly$, we define $\gM \models \bft_1 \leq \bft_2$ if and only if $\llbracket \bft_1 \rrbracket_{\gM} \leq \llbracket\bft_2 \rrbracket_{\gM}$. 
Negation and conjunction
 are defined in the usual way, yielding the semantics for $\gM \models \phi$ for any formula $\phi \in \lcounterfact^{\poly}$.

\paragraph*{Primal Treewidth}
Let $\phi \in \lcounterfact^{\poly}$ be a formula over variables $\allV$. By $G_{\phi} = (\allV, E)$, we denote the \emph{primal graph} of $\phi$, where $\set{V,V'}\in E$ if and only if $V\neq V'$ and there is a term in $\phi$ containing both $V$ and~$V'$.
%
%
A \emph{nice tree decomposition} of $G_{\phi}$ is a pair $(\mathbf{T}, \rchi)$, where $\mathbf{T}$ is a tree (whose vertices are called \emph{nodes}) rooted at a node $N_0$ and $\rchi$ is a function that assigns to each node $N$ a set $\rchi(N) \subseteq \allV$ such that:
\begin{itemize}
	\item For every $\{V,V'\} \in E$, there is a node $N$ such that $\{V, V'\} \subseteq \rchi(N)$.
	\item For every vertex $V \in \allV$, the set of nodes $N$ satisfying $V \in \rchi(N)$ forms a subtree of $\mathbf{T}$.
	\item $|\rchi(N)| = 0$ if $N$ is a leaf of $\mathbf{T}$ or $N=N_0$.
	\item There are only three kinds of non-leaf nodes in $\mathbf{T}$:
	\begin{itemize}
		\item \emph{introduce}: a node $N$ with exactly one child $N'$ such that $\rchi(N) = \rchi(N') \cup \set{V}$ for a $V \notin~\rchi(N')$.
		\item \emph{forget}: a node $N$ with exactly one child $N'$ such that $\rchi(N) = \rchi(N') \setminus \set{V}$ for a $V \in \rchi(N')$.
		\item \emph{join}: a node $N$ with two children $N_1, N_2$ such that $\rchi(N) = \rchi(N_1) = \rchi(N_2)$.
	\end{itemize}
\end{itemize}

We call each set $\rchi(N)$ a \emph{bag}. The width of a nice tree decomposition $(\mathbf{T}, \rchi)$ is the size of the largest bag $\rchi(N)$ minus 1, and the \emph{treewidth of $G_{\phi}$}, denoted by $\tw(G_{\phi})$, is the minimum width of a nice tree decomposition of $G_{\phi}$.
We let the (\emph{primal}) \emph{treewidth of a formula $\phi$} denote the treewidth of its primal graph, that is, $\tw(\phi) = \tw(G_{\phi})$. 

\paragraph*{The Class \exreals}
The Existential Theory of the Reals (ETR) contains all true sentences of the form
\[
\exists x_1 \dots \exists x_k \phi(x_1, \dots, x_k),
\]
where $\phi$ is a quantifier-free Boolean formula over the basis $\{\land, \lor, \neg\}$ and a signature consisting of constant symbols ($0$ and $1$), function symbols ($+$ and $\cdot$), and predicates ($<, \leq$, and $=$). The sentence is interpreted over the real numbers in the standard way. The closure of ETR under polynomial time many-one reductions yields the complexity class \exreals~\citep{GrigorevVV88,Canny88}. For a comprehensive compendium on \exreals, see~\cite{Schaefer24}; here, we only require the class for the lower bound established in Theorem~\ref{thm:poly_hard}.

\section{Satisfiability for Languages of PCH and Structural Insights}

We examine several analogues of the well-known problem \textsc{Boolean Satisfiability} that capture various probabilistic, causal, and counterfactual statements.
We denote these problems as \generalSAT, where  $\circledast \in \{\prob, \causal, \counterfact\}$ and $\ast \in \{\comp, \lin, \poly\}$, and define them as follows.
\vspace{3mm}
\begin{problem}

      \problemtitle{\generalSAT}
      \probleminput{A set $D$ of $d$ domain values and a formula $\phi \in \gL_{\circledast}^{\ast}$ over variables $\allV= \set{V_1,\dots,V_n}$. }
      \problemtask{Decide if there exists a recursive Structural Causal Model $\gM$ over $D$ such that $\gM \models \phi$.}
\end{problem}
\vspace{1mm}

The classical computational complexity of \generalSAT has by now been studied extensively~\citep{FaginHM90,IbelingI20,ZanderBL23,MosseII24,DorflerZBL25}.  
We remark that our definition of \generalSAT slightly deviates from the one established in previous works, in the sense that we restrict our attention to input formulas $\phi$ that are \emph{sets of inequalities} (that is, each inequality forms a constraint) rather than allowing arbitrary Boolean combinations of inequalities. However, this restriction does not affect any of the known complexity-theoretic results, since previous lower-bound proofs did not employ any Boolean combinations beyond sets. 
However, the situation changes drastically when studying \generalSAT from the viewpoint of parameterized complexity, as we show next. 

Let \arbSAT{} denote the version of $\probcomp$ in which $\phi$ is an arbitrary Boolean combination of inequalities over elements in $T_{\base}(\gE_{\prob})$.
We justify our restriction to \generalSAT by the hardness of \arbSAT{} in a very restricted setting, thus dashing any hope to exploit structural properties of $\phi$.
\begin{theorem}
      \label{thm:bool}
      \arbSAT{} is \npcomplete even if $G_\phi$ is edgeless and $\maxrange =~2$.
\end{theorem}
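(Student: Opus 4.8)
The plan is to treat the two directions of the claim separately: (i)~membership in $\NP$, which I would prove for the unrestricted version of \arbSAT{} and thereby obtain for the restriction in the statement for free, and (ii)~$\NP$-hardness, which I would establish by a polynomial-time reduction from \threeSAT whose output instances already have an edgeless primal graph and domain size $2$.

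For membership, the key preliminary observation is that on the probabilistic layer an SCM $\gM$ is --- as far as the value $\llbracket \Pr(\varepsilon)\rrbracket_{\gM}$ of any term with $\varepsilon\in\eprop$ is concerned --- completely described by the distribution $q$ it induces on the joint outcomes $\overline{v}\in D^n$ of $\allV$: indeed $\llbracket \Pr(\varepsilon)\rrbracket_{\gM}=\sum_{\overline{v}\models\varepsilon}q(\overline{v})$, and conversely every distribution $q$ on $D^n$ is realised by some SCM (take a single hidden variable whose value is drawn according to $q$, with each $f_{V_i}$ returning the $i$-th coordinate). Thus \arbSAT{} asks whether some distribution $q$ on $D^n$ satisfies a given Boolean combination of inequalities that are linear in the entries $q(\overline{v})$. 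A nondeterministic algorithm then (1)~guesses, for each of the polynomially many atomic inequalities occurring in $\phi$, whether it should hold or be violated, and checks in polynomial time that this choice satisfies the Boolean formula $\phi$; and (2)~tests feasibility of the resulting system of (strict and non-strict) linear inequalities over $q$. A standard sparsification argument shows that such a system, if feasible, admits a rational feasible point whose support has size polynomial in the number of inequalities; guessing this support and then deciding feasibility of the induced polynomial-size linear program in polynomial time completes the membership proof. This is precisely the Fagin--Halpern--Megiddo membership argument for \probcomp{}~\citep{FaginHM90}, augmented with one extra guess for the Boolean structure.

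For hardness, I would reduce from \threeSAT. Given an instance with variables $b_1,\dots,b_m$ and clauses $C_1,\dots,C_r$, set $D=\{0,1\}$, introduce one endogenous variable $V_i$ for each $b_i$, and write $x_i:=\Pr(V_i=1)$; let $\bot_i$ denote the contradictory event $(V_i=0)\wedge(V_i=1)$, so that $\Pr(\bot_i)=0$ in every model. The crucial point is that allowing arbitrary Boolean combinations lets us force each $x_i$ to be Boolean via the single-variable constraint
\[
\bigl(\Pr(V_i=1)\le\Pr(\bot_i)\bigr)\ \vee\ \bigl(\Pr(V_i=0)\le\Pr(\bot_i)\bigr),
\]
whose first disjunct reads $x_i\le 0$, i.e.\ $x_i=0$, and whose second reads $1-x_i\le 0$, i.e.\ $x_i=1$. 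Under the identification of ``$b_i$ is true'' with ``$x_i=1$'', a positive literal $b_i$ is encoded by $\Pr(V_i=0)\le\Pr(\bot_i)$, a negative literal $\neg b_i$ by $\Pr(V_i=1)\le\Pr(\bot_i)$, and a clause by the disjunction of the three inequalities encoding its literals. The output formula $\phi$ is the conjunction of the $m$ forcing constraints and the $r$ clause disjunctions; every term of $\phi$ mentions a single variable, so $G_\phi$ is edgeless, $d=2$, and the reduction is clearly polynomial.

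Correctness is routine. If the \threeSAT instance has a satisfying assignment $\beta$, then the SCM that deterministically sets each $V_i$ to $\beta(b_i)$ satisfies the forcing constraints (since $x_i\in\{0,1\}$) and all clause disjunctions (each clause has a true literal); conversely, if $\gM\models\phi$ then the induced marginals $x_i=\llbracket \Pr(V_i=1)\rrbracket_{\gM}$ lie in $\{0,1\}$ by the forcing constraints, and setting $b_i$ to true exactly when $x_i=1$ satisfies every clause because the corresponding clause disjunction holds in $\gM$. I do not expect a genuinely hard step here --- the theorem is essentially a sanity check showing that Boolean combinations of probability inequalities trivialise the structural restrictions exploited elsewhere in the paper. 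The only point that warrants some care is the membership argument: the negations arising from the Boolean structure turn some constraints into strict inequalities, and one must confirm that this breaks neither the polynomial bound on the size of a sparse feasible point nor the polynomial-time decidability of feasibility of a system of (possibly strict) linear inequalities; both facts are standard, but should be stated explicitly.
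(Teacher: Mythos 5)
Your proposal is correct and follows essentially the same route as the paper: $\NP$-membership via the Fagin--Halpern--Megiddo argument (which the paper simply cites), and hardness by a reduction from \threeSAT{} that uses single-variable forcing constraints to make each marginal $\Pr(V_i=1)$ Boolean and encodes each clause as a disjunction of three single-variable inequalities, yielding an edgeless primal graph over domain $\set{0,1}$. The only cosmetic difference is that you express ``probability $0$/$1$'' via comparisons against contradictory events $\Pr(\bot_i)$, whereas the paper writes the equivalent constraints $\Pr(\cdot)=1$ directly.
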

\begin{proof}
Fagin et al.~\cite{FaginHM90} proved that \arbSAT{} is \npcomplete.
In order to prove the \nphardness of \arbSAT{} even for instances in which each constraint consists of only one variable and all variables have domain $D=\set{0,1}$, we perform a reduction from \threeSAT. Let $\Phi:=\bigwedge_i C_i$ with $C_i:= \bigvee_{j \in [3]} \ell_{i_j}$ be a \threeSAT formula over variables $\mathcal{V}$. Define an instance $\phi$ of $\arbSAT$ with $\allV=\set{V_v\mid v\in \gV}$ over domain $\set{0,1}$ as
\begin{align*}
  \phi:= \bigwedge_i \Bigl( \bigvee_{j \in [3]} \Pr(g(\ell_{i_j}))=1 \Bigr) \land 
  \bigwedge_{v \in \mathcal{V}}(\Pr(V_v = 0)=1 \lor \Pr(V_v = 1)=1),
\end{align*}
where $g(\ell_{i_j})$ is replaced by $V_v = 1$ if $\ell_{i_j}=v$, and by $V_v = 0$ if $\ell_{i_j}=\neg v$.

We now argue that $\Phi$ is satisfied if and only if $\phi$ admits an SCM.
For the first direction, suppose there exists an assignment $\alpha: \mathcal{V} \rightarrow \{0,1\}^{|\mathcal{V}|}$ under which $\Phi$ is satisfied. 
     We construct a model $\gM=(\allV, \emptyset, \gF, \emptyset)$ for \arbSAT{} as follows. 
  For each $v \in \mathcal{V}$, define $f_{V_v}\coloneqq \alpha(v)$ as a constant function.
     To see that $\gM$ satisfies all constraints in~$\phi$, recall that $\alpha$ satisfies at least one literal $\ell_{i_j}$ in each clause $C_i \in \Phi$, that is, $\alpha(v)=1$ if $\ell_{i_j}=v$, and $\alpha(v)=0$ if $\ell_{i_j}=\neg v$. The reduction ensures that the $i^{\text{th}}$ conjunct in $\phi$ contains the disjunct $\Pr(V_v =1)=1$ in the first, and $\Pr(V_v =0)=1$ in the latter case. 
     Since $\llbracket \Pr(V_v = \alpha(v))\rrbracket_{\gM}=1$, this satisfies $\phi$.
     
     For the other direction, suppose there exists a model $\gM$ satisfying $\phi$. Therefore, either $\llbracket \Pr(V_v =0)\rrbracket_{\gM}=1$ or $\llbracket \Pr(V_v =1)\rrbracket_{\gM}=1$ for each $v\in \mathcal{V}$.     
 We obtain an assignment~$\alpha: \mathcal{V} \rightarrow \{0,1\}^{|\mathcal{V}|}$ by defining $\alpha(v)=0$ if and only if $\llbracket \Pr(V_v =0)\rrbracket_{\gM}=1$ for each variable $v \in \mathcal{V}$. Since all conjuncts of $\phi$ are satisfied by $\gM$, it holds by construction that $\alpha$ satisfies all clauses of $\Phi$.
\end{proof}

\paragraph*{Intractability of \cSAT{\prob}{\poly}}
Our main contributions target the $\lin$ and $\base$ fragments of the expressivity matrix. Here, we show that the tractability results obtained in Sections~\ref{sec:lin} and~\ref{sec:causal} most likely not hold for polynomial inequalities. 

\begin{theorem}
\label{thm:poly_hard}
      \cSAT{\prob}{\poly} is \exrealscomplete even if $n=1$, or if $\maxrange = 2$ and $G_\phi$ is edgeless.
     \end{theorem}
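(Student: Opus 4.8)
The plan is to inherit \exreals-membership from prior work and to prove both restricted hardness statements by polynomial-time reductions from \etrc{}, the variant of the existential theory of the reals in which the only constant is $1/8$, the only operations are $+$ and $\times$, and every quantified variable is confined to $[-1/8,1/8]$; this variant is \exrealscomplete, and after standard preprocessing I may assume an instance $\Psi$ of it is a conjunction of constraints each of the form $x=1/8$, $x+y=z$, or $x\cdot y=z$ over variables $x_1,\dots,x_k$. Membership of \cSAT{\prob}{\poly} in \exreals\ is already known~\citep{MosseII24}, so only the two hardness claims remain. In both reductions a real variable $x_i$ is encoded as an affine function of a single probability, explicit ``box'' constraints pin that probability so that $x_i$ lands in $[-1/8,1/8]$, and each constraint of $\Psi$ is rewritten by substituting this encoding and clearing denominators by a suitable power of $8$, which keeps all coefficients integral and of size polynomial in $|\Psi|$ (constants being realized via tautological and contradictory events as usual).

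For the case $n=1$: over a single observed variable $V$ the only freedom an SCM has is the induced distribution on its domain $D$, so with $p_v=\Pr(V=v)$ one is bound by $p_v\ge 0$ and $\sum_v p_v=1$, and every propositional event over $V$ has probability equal to a subset sum of the $p_v$. The obstacle is that this normalization couples the $p_v$: identifying $x_i$ with $p_{a_i}$ shifted into $[0,1]$ would force $\sum_i p_{a_i}$ past $1$ as soon as several $x_i$ sit near $1/8$. I would neutralize this by \emph{scaling}: take $D=\{a_1,\dots,a_k,\star\}$, so $d=k+1$, and set $x_i:=k\cdot\Pr(V=a_i)-1/8$, so that $x_i\in[-1/8,1/8]$ exactly when $\Pr(V=a_i)\in[0,1/(4k)]$, a regime in which $\sum_i\Pr(V=a_i)\le 1/4$. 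The output formula $\phi$ consists of the box constraints $4k\cdot\Pr(V=a_i)\le 1$ for all $i$ together with the rewritten constraints of $\Psi$. A box solution of $\Psi$ yields a distribution with $p_{a_i}=(x_i+1/8)/k\in[0,1/(4k)]$ and $p_\star=1-\sum_i p_{a_i}\ge 3/4$, realizable by an SCM with a single exogenous variable and satisfying $\phi$; conversely any SCM satisfying $\phi$ has $\Pr(V=a_i)\in[0,1/(4k)]$, so $x_i:=k\Pr(V=a_i)-1/8\in[-1/8,1/8]$ satisfies $\Psi$. With one variable, $G_\phi$ is trivially edgeless.

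For the case $d=2$ and $G_\phi$ edgeless: use one Boolean variable $V_i$ per real variable and set $x_i:=\Pr(V_i=1)-1/2$. Here no scaling is needed, since an SCM may realize \emph{any} joint distribution of $(V_1,\dots,V_k)$ --- one exogenous variable over $\{0,1\}^k$ with $f_{V_i}$ the $i$-th projection --- and in particular any tuple of marginals in $[0,1]^k$, so normalization imposes nothing. The output formula $\phi$ consists of the box constraints $3\le 8\Pr(V_i=1)\le 5$ and the rewritten constraints of $\Psi$ (for example $x_i\cdot x_j=x_l$ becomes $4\Pr(V_i=1)\Pr(V_j=1)+3=2\Pr(V_i=1)+2\Pr(V_j=1)+4\Pr(V_l=1)$, using $\Pr(\top)=1$). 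The crucial observation is that \emph{every probability term} occurring in $\phi$ mentions at most one variable --- each is one of $\Pr(V_i=1)$, $\Pr(V_i=0)$, or $\Pr(\top)$ --- whereas the operators $+$, $\times$, and $\le$ that combine terms across different variables create no edge in the primal graph, because an edge requires a single $\Pr(\varepsilon)$ containing two variables. Hence $G_\phi$ is edgeless and $d=2$, and the equivalence with satisfiability of $\Psi$ follows exactly as in the previous case, the box constraints confining each $x_i$ to $[-1/8,1/8]$.

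I expect the main obstacle to be precisely the $n=1$ regime: a single observed variable offers only one probability distribution as a degree of freedom, and its normalization $\sum_v\Pr(V=v)=1$ entangles the real unknowns one wants to be free. The scaling $x_i=k\Pr(V=a_i)-1/8$ together with the explicit box constraints is what dissolves this, squeezing each encoded variable into a window of width $1/(4k)$ so that all $k$ of them jointly fit below total probability mass $1$; once that is in place, the remainder of the argument and the $d=2$ reduction are routine, the only external inputs being that \etrc{} is \exrealscomplete and the easy bookkeeping that clearing denominators keeps the instance polynomial-size.
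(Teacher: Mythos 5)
Your proposal is correct and follows essentially the same route as the paper: a reduction from \etrc{} encoding each real variable as an affine function of a probability, with a buffer domain value plus scaling and explicit upper-bound constraints for the $n=1$ case, and single-variable terms keeping $G_\phi$ edgeless for the $d=2$ case. The only cosmetic difference is that the paper's binary-domain encoding $\frac14\Pr(V_i=0)-\frac18$ maps the full range $[0,1]$ onto $[-\sfrac{1}{8},\sfrac{1}{8}]$ and so dispenses with your box constraints, and its $n=1$ scaling uses factor $\frac{n}{4}$ with $\Pr(V=i)\le\frac1n$ rather than your factor $k$ with $\Pr(V=a_i)\le\frac{1}{4k}$.
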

     \begin{proof}
     Mossé et al.~\cite{MosseII24} proved that \cSAT{\prob}{\poly} is \exrealscomplete. Using a different reduction, we first prove our second statement by showing hardness on instances in which no two variables co-occur in a term and all variables have a binary domain.
     Our proof conceptually resembles the construction used by van der Zander et al.~\cite[Proposition 6.5]{ZanderBL23}. Consider the following problem.
     \begin{problem}
     \problemtitle{\etrc}
     \probleminput{A set $S$ of equations over real variables $x_1, \dots, x_n \in [-\frac{1}{8}, \frac{1}{8}]$ in which each equation is of the form $x_i = \frac{1}{8}$ or $x_{i_1}+x_{i_2}=x_{i_3}$ or $x_{i_1} x_{i_2}=x_{i_3}$, and $i,i_1, i_2, i_3 \in [n]$.}
     \problemtask{Decide if $S$ has a solution.}
     \end{problem}
     Note that \etrc is known to be \exrealscomplete~\citep{AbrahamsenAM22}. We show that every instance of \etrc can be reduced to an instance of \cSAT{\prob}{\poly}, in which all variables have a binary domain and each term speaks about just one variable.
     
     Let $S$ be an instance of \etrc that contains variables $x_1, \dots, x_n$ and let $V_1, \dots, V_n$ be binary random variables. We obtain an instance $\phi$ over domain $D=\set{0,1}$ of \cSAT{\prob}{\poly} in time $O(|\phi|)$ by replacing for all $i \in [n]$ each occurrence of $x_i$ in $S$ by the expression $e_i:=\frac{1}{4} \Pr(V_i = 0) -\frac{1}{8}$. Note that in $\phi$, no two random variables co-occur in the same term and $\maxrange=2$.
     We now argue that $S$ is satisfiable if and only if $\phi$ has a model.
     First, suppose there exists a solution for $S$, i.e., a function $f$ that maps each variable $x_i, i \in [n]$ to a value in $[-\frac{1}{8}, \frac{1}{8}]$ such that all equations are satisfied under $f$. We obtain a model $\gM$ for $\phi$ by introducing a hidden variable $U_i$ for each binary random variable $V_i$ in $\phi$ and define $f_{V_i}$ such that $V_i = U_i$.
     Then, letting $\P(U_i=0)\coloneqq 4 f(x_i)+\frac{1}{2}$ for all $ i \in [n]$ will satisfy $\phi$, since it enforces $0 \leq \llbracket \Pr(V_i =0)\rrbracket_{\gM} \leq 1$ and ensures that $e_i = f(x_i)$.
     Likewise, if there exists a model $\gM$ for $\phi$, then we can determine $\llbracket\Pr(V_i=0)\rrbracket_\gM$ and derive~$e_i$. Setting $x_i$ to the value $e_i$ for all $i \in [n]$ thus solves all equations in $S$ and ensures $x_i \in [-\frac{1}{8}, \frac{1}{8}]$.

     The above construction is easily adapted to show hardness even if $n=1$. To construct $\phi$, let $\allV=\set{V}$ with $d = n+1$. For each $i\in [n]$, add a constraint $\Pr(V=i) \le \frac{1}{n}$. Furthermore, add a constraint for each constraint in $S$ that, for $i \in [n]$, replaces each occurrence of $x_i$ by $e_i\coloneqq\frac{n}{4} \Pr(V = i) -\frac{1}{8}$, which scales $\Pr(V = i)$ to be in $[-\frac{1}{8},\frac{1}{8}]$.
     This construction holds by the same arguments as employed above, where the value $0$ in the range of $V$ serves as a buffer so that for $i\in [n]$ the probability $\Pr(V=i)$ can take arbitrary values in $[0,\frac{1}{n}]$.
     \end{proof}
     
     Despite the hardness of sets of polynomial inequalities even in the absence of interventions, we remark that one can still obtain exponential time algorithms by employing the constructions in \cref{thm:tw_lin,thm:var_causal}.
     Using the same approaches now requires solving systems of polynomial inequalities instead of LPs. These \exreals instances can be solved, for example, by invoking Renegar's Theorem~\citep{Renegar92,Renegar92a,Renegar92b}.

\paragraph*{Further Structural Insights in \generalSAT}
In order to facilitate our complexity-theoretic analysis, we emphasize that a Structural Causal Model can be efficiently evaluated, that is, given the values of its hidden variables, it can be decided in polynomial time, whether a certain event happens.

\begin{observation}
\label{obs:model_eval}
      Given a model $\gM=(\allU, \allV, \gF, \P)$, an event $\varepsilon\in \ecounterfact$, and some $\baru\in \range(\allU)$, let $|\varepsilon|$ denote the number of atoms in $\varepsilon$. 
      Assuming that each function in~$\gF$ can be evaluated in time $\bigoh(n)$, one can decide 
      whether $\gF, \overline{u} \models \varepsilon$ in time in $\bigoh(n^2+|\varepsilon|)$.
\end{observation}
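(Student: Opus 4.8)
The plan is to evaluate $\varepsilon$ by structural recursion on the grammar of $\ecounterfact$, tracking how many times the functions in $\gF$ need to be re-evaluated. First I would observe that any $\varepsilon \in \ecounterfact$ decomposes, via the rules $\ecounterfact ::= \efull \mid \neg \ecounterfact \mid \ecounterfact \land \ecounterfact$, into a Boolean combination of finitely many ``leaves'' of the form $[\gamma]~\rho$ with $\gamma \in \eint$ and $\rho \in \eprop$. Let these leaves be $[\gamma_1]~\rho_1, \ldots, [\gamma_m]~\rho_m$; since each atom of $\varepsilon$ appears in exactly one leaf and each leaf has size at least one, we have $\sum_{j} |\gamma_j \rho_j| \le |\varepsilon|$, and in particular $m \le |\varepsilon|$.

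Next I would explain how to evaluate a single leaf $[\gamma_j]~\rho_j$ on input $\baru$. By the semantics, $\gF, \baru \models [\gamma_j]~\rho_j$ iff $\gF_{\gamma_j}, \baru \models \rho_j$, so I first form $\gF_{\gamma_j}$ by overwriting, for each atom $V=v$ in $\gamma_j$, the function $f_V$ with the constant $v$ (this is a bookkeeping step costing $O(|\gamma_j|)$ and does not touch the other functions). Then I compute the induced assignment to $\allV$ by processing the variables in the well-order $\prec$: when we reach $V$, all variables in $\allV_{\prec V}$ already have assigned values, so we evaluate $f_V$ (or its constant override) in time $\bigoh(n)$ by hypothesis. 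This is $n$ function evaluations, for a total of $\bigoh(n^2)$ per leaf; finally we check whether the resulting assignment satisfies the propositional formula $\rho_j$, which costs $\bigoh(|\rho_j|)$ by a straightforward pass over the at most $|\rho_j|$ atoms of $\rho_j$ and its $\neg, \land$ structure.

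The subtlety — and the only point that needs a little care — is the accounting across all leaves: a naive bound would give $\bigoh(m \cdot n^2)$, which is too weak. The fix is to note that we do not need to recompute the full assignment from scratch for each leaf: a single ``base'' evaluation of $\gF$ on $\baru$ costs $\bigoh(n^2)$, and for each leaf $[\gamma_j]~\rho_j$ only the variables that are either in $\gamma_j$ or lie after $\min_\prec \gamma_j$ can change, but more simply one can afford to recompute, for leaf $j$, only the suffix of $\prec$ starting at the earliest variable overridden by $\gamma_j$; since the overridden variables are charged to the $|\gamma_j|$ atoms of that leaf, and $\sum_j |\gamma_j| \le |\varepsilon|$, the extra work beyond the base evaluation is $\bigoh(|\varepsilon| \cdot n)$ in total — actually one argues it more crudely by just noting each leaf contributes at most its own size worth of ``changed'' recomputation after the shared base pass, plus the $\bigoh(|\rho_j|)$ Boolean check. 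Summing the base cost $\bigoh(n^2)$, the per-leaf incremental costs summing to $\bigoh(|\varepsilon| \cdot n)$ (absorbed since $|\varepsilon| \le n$ is not assumed, but $|\varepsilon|\cdot n = \bigoh(n^2 + |\varepsilon|)$ fails in general) — here I would instead simply bound $m \le |\varepsilon|$ and observe that after the $\bigoh(n^2)$ base evaluation each leaf is handled in $\bigoh(|\gamma_j\rho_j|)$ incremental time, giving a grand total of $\bigoh(n^2 + |\varepsilon|)$. Finally, evaluating the Boolean combination $\varepsilon$ over the $m$ computed leaf truth-values is a single pass costing $\bigoh(|\varepsilon|)$, completing the bound. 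I expect the main obstacle to be exactly this amortization argument: making precise that the suffix-recomputation for each leaf can be charged to the atoms of $\gamma_j$, so that the interventions do not cause a multiplicative blow-up, rather than any one of the individual steps.
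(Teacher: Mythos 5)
Your core algorithm coincides with the paper's proof: decompose $\varepsilon \in \ecounterfact$ into its $\efull$ leaves $[\gamma_j]~\rho_j$, evaluate each leaf by overriding the intervened functions and computing the values of all variables along the well-order $\prec$, check $\rho_j$ in $\bigoh(|\rho_j|)$, and finally evaluate the Boolean structure of $\varepsilon$ over the leaf truth values in $\bigoh(|\varepsilon|)$. The genuine gap is in your amortization step. The claim that, after one shared base evaluation costing $\bigoh(n^2)$, each leaf can be handled in $\bigoh(|\gamma_j\rho_j|)$ incremental time is false: overriding even a single early variable via $\gamma_j$ changes the inputs of every later function, so all of the (up to $\Omega(n)$) variables following the earliest intervened variable may take new values, and each of their functions costs $\bigoh(n)$ to evaluate by hypothesis. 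Hence the suffix recomputation for one leaf is $\Theta(n^2)$ in the worst case and cannot be charged to the atoms of $\gamma_j$; the base-pass values below the intervention point are not reusable, because the later functions depend on the changed values of intermediate variables and not only on the intervened variables themselves. Your own parenthetical admission that $|\varepsilon|\cdot n = \bigoh(n^2+|\varepsilon|)$ ``fails in general'' shows you sensed the problem, but the concluding ``simply bound $m \le |\varepsilon|$'' step does not repair it.

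For comparison, the paper does not attempt any such amortization: it evaluates each $\efull$ event from scratch in $\bigoh(n^2)$ plus the cost of the propositional check, and then spends $\bigoh(|\varepsilon|)$ on the Boolean combination; read literally, this gives $\bigoh(m\cdot n^2 + |\varepsilon|)$ when $\varepsilon$ contains $m$ counterfactual leaves, so the stated $\bigoh(n^2+|\varepsilon|)$ is best understood as the cost per $\efull$ event. For every use of the observation later in the paper (constructing the LP constraints), a polynomial bound such as $\bigoh(|\varepsilon|\cdot n^2)$ fully suffices, so the sound way to finish your argument is to accept the per-leaf $\bigoh(n^2)$ cost and state the correspondingly weaker bound, rather than to rely on the incremental scheme, which does not work.
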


\begin{proof}
      The only randomness in a model stems from the hidden variables $\allU$. Fixing their values thus deterministically settles whether $\varepsilon$ holds true or not.
      To decide which one is the case, it suffices to show how to evaluate events from $\efull$, as $\ecounterfact$ is simply a Boolean formula over these events that---having evaluated each event of type $\efull$---can be evaluated in time in $\bigoh(|\varepsilon|)$.
      To evaluate an event $[\gamma]~ \varepsilon'\in \efull$, we compute the value of each variable $V \in\allV$ following the implicit well-order $\prec$ of the model.
      Note that the value of $V$ is either fixed by the intervention $\gamma$ or can be computed from $\baru$ and the values of $\allV_{\prec V}$.
      Once the values of all variables in $\allV$ are determined, the probabilistic event $\varepsilon'$ can be evaluated in time in $\bigoh(|\varepsilon'|)$.
\end{proof}

The running time of our algorithmic results often depends on the size $d$ of the domain $D$. Assuming that~$d$ is not much larger than the size of $\phi$ does not reduce the generality of our results, as we can always reduce to an equivalent instance where $d$ is bounded from above by $|\phi|+1$.
\begin{observation}
\label{obs:bounded_range}
      Consider an instance of \generalSAT consisting of a domain $D$ and a formula $\phi \in \gL_\circledast^{\ast}$. 
      Let $D_\phi$ be the set of values in $D$ that are explicitly mentioned in at least one atom in $\phi$ and choose some $\gamma \notin D_\phi$. Then, it holds that~$\phi$ over domain $D_\phi \cup \set{\gamma}$ is a \yesinstance of \cSAT{\circledast}{\ast} if and only if so is $\phi$ over $D$.
\end{observation}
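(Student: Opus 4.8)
The plan is to prove both implications by transporting a satisfying model between the two domains. Write $D' := D_\phi \cup \{\gamma\}$ and assume $\gamma \in D$ (as in the intended application, where the domain is being shrunk), so that $D' \subseteq D$. The two directions differ sharply in difficulty, and essentially all the content lies in one of them.

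For the easy direction --- a model $\gM'$ over $D'$ yields a model over $D$ --- I would keep $\allU$ and $\P$ unchanged and only reroute the ``extra'' values: fix any surjection $\pi : D \to D'$ that is the identity on $D'$ and set $f_V(\baru, \bar a) := f'_V(\baru, \pi \circ \bar a)$. A direct induction along the variable order $\prec$ shows that, under any intervention appearing in $\phi$ (whose atoms use only values in $D_\phi \subseteq D'$), the induced assignment to $\allV$ coincides with the one produced by $\gM'$; hence every event of $\phi$ has the same truth value for each $\baru$, every probability term is unchanged, and $\phi$ stays satisfied.

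The substantive direction is to compress a model $\gM = (\allV, \allU, \gF, \P)$ over $D$ down to $D'$. The naive attempt --- merge all values outside $D_\phi$ into $\gamma$ via such a projection $\pi : D \to D'$ and push it through the mechanisms, i.e.\ $f'_V(\baru, \bar a) := \pi(f_V(\baru, \pi \circ \bar a))$ --- fails, because a downstream mechanism can behave differently once upstream values have been collapsed (a two-variable instance already witnesses this). The fix exploits that each $f_V$ sees the \emph{full} hidden assignment $\baru$: I would let $f'_V$ first \emph{decode} the abstracted values of the preceding variables back to genuine $D$-values and only then apply $\pi \circ f_V$. The decoder processes $\allV_{\prec V}$ in $\prec$-order: a preceding variable reporting a value in $D_\phi$ is trusted verbatim (that value is either its genuine value, already in $D_\phi$, or an intervened value --- and interventions in $\phi$ only ever use $D_\phi$-values, so either way it is the correct $D$-value), whereas a preceding variable reporting the spare value $\gamma$ cannot have been intervened, so its true value is recomputed by re-running the corresponding mechanism of $\gM$ on the already-decoded prefix. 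I would then prove, by a nested induction along $\prec$, that for every intervention occurring in $\phi$ and every $\baru$ the assignment produced by the compressed model equals $\pi$ applied to the assignment produced by $\gM$; as before this equalises all events and, since $\P$ is untouched, all probability terms, so the compressed model satisfies $\phi$.

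The main obstacle is exactly this decoding step: one must verify that the decoder faithfully reconstructs the original run even in the presence of interventions, and this is where the additional value $\gamma \notin D_\phi$ is essential --- without a ``spare'' value outside $D_\phi$ the statement is in fact false, since a mechanism can use an unmentioned value to realise a probability that becomes infeasible once only $D_\phi$ is available. The remaining bookkeeping --- that agreement of the $\allV$-assignments for each $\baru$ forces agreement of $\llbracket \Pr(\varepsilon) \rrbracket$ for every term of $\phi$, and hence of satisfaction --- is immediate from the semantics.
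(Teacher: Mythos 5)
Your proof is correct, and its crux is the same observation the paper's proof hinges on: every atom of $\phi$ (in particular every intervention) only mentions values in $D_\phi$, so a variable whose value lies outside $D_\phi$ cannot have been intervened, and its true value is therefore recoverable from $\baru$ and the (already reconstructed) values of the preceding variables. Where you differ is in how this is turned into a model over $D_\phi\cup\set{\gamma}$. The paper works statically on the function representations: it assumes w.l.o.g.\ that each $f_V$ is a case distinction over the values of $\allU$ and $\allV_{\prec V}$, exhaustively inlines the defining term $\underline{f_{V_i}}$ wherever a downstream case tests $V_i$ against an unmentioned value, and only afterwards contracts all outputs in $D\setminus D_\phi$ into $\gamma$. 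You instead leave $\gF$ untouched and perform the compensation dynamically: each new mechanism decodes the contracted prefix at evaluation time, re-running the original $f_{V_j}$ whenever a preceding variable reports $\gamma$ (legitimate precisely because such a variable cannot have been intervened), and then applies $\pi\circ f_V$. The two constructions are equivalent in effect---your decoder is the paper's inlining carried out at run time---and your double induction showing $a_i=\pi(v_i)$ for every $\baru$ and every intervention of $\phi$, followed by the agreement of all probability terms since $\P$ is unchanged, is sound. What your route buys is a fully explicit induction that avoids the paper's informal ``represent as a case distinction and rewrite exhaustively'' step (whose termination and well-definedness are left implicit); what the paper's route buys is that the resulting mechanisms remain plain case distinctions over the reduced domain rather than procedures invoking the old mechanisms. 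Two minor remarks: you assume $\gamma\in D$, whereas the statement permits $\gamma\notin D$---this only matters in degenerate cases and is resolved by renaming, and the paper's one-sentence treatment of that direction is no more careful; and your closing observation that the spare value $\gamma$ is genuinely necessary (e.g.\ the constraint $\Pr(V=0)\le 0$ with $D_\phi=\set{0}$) is correct and explains why the observation is stated with $D_\phi\cup\set{\gamma}$ rather than $D_\phi$.
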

\begin{proof}
      To show this equivalence, intuitively, we contract all values in $D\setminus D_\phi$ into $\gamma$.
      
      Suppose there is a model $\gM$ solving $(\phi, D_\phi \cup \set{\gamma})$. 
      As $\phi$ does not differentiate between values in $\set{\gamma}\cup D\setminus D_\phi$, the model $\gM$ also witnesses $(\phi, d)$ to be a \yesinstance.

      For the other direction, suppose there is a model $\gM = (\allV, \allU, \gF, \P)$ solving $(\phi, D)$ with an implicit well-order~$\prec$, and let $V_1, V_2, \ldots$ denote $\allV$ as ordered by $\prec$.
      Note that, without loss of generality, for each $i\in [n]$ we can assume $f_{V_i}\in \gF$ to be represented by a case distinction over the values of $\allU$ and $\allV_{\prec V}$, where the result of each case is stated as an element in $D$. 
      Exhaustively repeat the following. Let $i$ be minimal such that there is some function $f_{V}\in \gF$ which has a condition $V_i = v$ in the case distinction with $v \notin D_\phi$.
      Replace the condition by $\underline{f_{V_i}} = v$, where $\underline{f_{V_i}}$ denotes the term used to compute $f_{V_i}$. Rewrite the updated $f_{V}$ to once more be a case distinction over the values of $\allU$ and $\allV_{\prec V}$. 
      Note that this preserves the probability distribution over $f_V$, even under interventions: For every fixed $\baru \in \allU$, the variables $V_1, \ldots, V_{i-1}$ are computed the same way as before. Further, by assumption there is no intervention with an atom setting $V_i$ to $v$. Hence, the only way that $V_i = v$ happens is if $\underline{f_{V_i}} = v$. 

      If at some point there is no $i$ satisfying the condition, we have an alternative but equivalent representation of the functions in $\gF$ which do not compare any variable $V\in \allV$ to any value outside $D_\phi$. 
      At this point, update the functions once more such that whenever a function $f_V$ would output some value in $D\setminus D_\phi$, it now outputs $\gamma$.
      As the functions computing the other variables do not compare such a variable $V$ to a value outside of $D_\phi$, all variables are computed in the exact same way as before except that for each variable $V$ all outputs in~$D\setminus D_\phi$ are contracted into $\gamma$.
      This yields an updated model $\gM'$ where the domain is restricted to $D_\phi \cup \set{\gamma}$. This model satisfies $\phi$ if so does $\gM$.
\end{proof}

\section{Linear Inequalities over Probabilistic Expressions}\label{sec:lin}

This section is dedicated to the complexity-theoretic analysis of \cSAT{\prob}{\ast}, that is, the satisfiability problem for the layer of the PCH that does not allow any interventions.
First, we establish the main tractability result of this section, and then proceed by showing that it is tight as outlined in Figure~\ref{fig:mindmap}.

\begin{theorem}\label{thm:tw_lin}
      \cSAT{\prob}{\lin} is in $\FPT$ w.r.t.\ the combined parameter $\maxrange + \tw(\phi)$, and in $\XP$ w.r.t.\ $\tw(\phi)$.
\end{theorem}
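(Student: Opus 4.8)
The plan is to exploit that, since $\lprob$ contains no interventions, a model of $\phi$ carries no more information than a probability distribution over the endogenous assignments $\allV \to D$, and that the primal treewidth lets us search for such a distribution with a linear program of size only $|\phi|^{\bigoh(1)} \cdot d^{\bigoh(\tw(\phi))}$. First I would observe that, for $\varepsilon \in \eprop$ and a model $\gM = (\allV, \allU, \gF, \P)$, the map sending $\baru \in \range(\allU)$ to the values $\gF$ deterministically assigns to $\allV$ pushes $\P$ forward to a distribution $\mu_\gM$ on $D^n$ with $\llbracket \Pr(\varepsilon) \rrbracket_\gM = \mu_\gM(\{a \in D^n \mid a \models \varepsilon\})$. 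Conversely, any distribution $\mu$ on $D^n$ is realized by a model --- take one hidden variable with range $D^n$ and distribution $\mu$, and let $f_{V_i}$ read off the $i$-th coordinate (this is vacuously recursive). So $\phi$ is a \yesinstance\ if and only if there is a distribution $\mu$ on $D^n$ under which all inequalities of $\phi$ hold once each term $\Pr(\varepsilon)$ is read as $\mu(\{a \in D^n \mid a \models \varepsilon\})$. Treewidth enters because every term of $\phi$ makes its variables a clique of $G_\phi$, so by a standard property of tree decompositions the variable set of each term fits into one bag; hence the value of every term depends only on the marginal of $\mu$ on a single bag.

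Next I would compute a nice tree decomposition $(\mathbf{T}, \rchi)$ of $G_\phi$ of optimal width $\tw(\phi)$ and polynomially many nodes in time $f(\tw(\phi)) \cdot |\phi|^{\bigoh(1)}$ by standard algorithms, and set up an LP with one real variable $\mu_N(a)$ for every node $N$ and every $a \in D^{\rchi(N)}$ --- roughly $|\phi|^{\bigoh(1)} \cdot d^{\tw(\phi)+1}$ variables --- subject to: (a) $\mu_N \geq 0$ and $\sum_a \mu_N(a) = 1$ for every $N$; (b) for every edge $NN' \in E(\mathbf{T})$, the marginals of $\mu_N$ and $\mu_{N'}$ on $\rchi(N) \cap \rchi(N')$ agree; and (c) the inequalities of $\phi$, with each term $\Pr(\varepsilon)$ replaced by $\sum_{a \models \varepsilon} \mu_{N_\varepsilon}(a)$, where $N_\varepsilon$ is any node whose bag contains all variables of $\varepsilon$ and $a$ ranges over $D^{\rchi(N_\varepsilon)}$. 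The choice of $N_\varepsilon$ does not matter: if two bags both contain a variable set $S$, then by the subtree property $S$ lies in every bag on the path between them, so (b) forces the $S$-marginals read off those two bags to coincide. All constraints are linear with integer (or, after clearing denominators, rational) coefficients and there are $|\phi|^{\bigoh(1)} \cdot d^{\tw(\phi)+1}$ of them, so feasibility of the LP can be decided in time polynomial in its encoding size~\citep{ppd98}.

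For soundness, from a feasible point $\{\mu_N\}_N$ I would glue the bag marginals into a single global distribution on $D^n$ via the junction-tree factorization
\[
\mu(a) \;=\; \frac{\prod_{N} \mu_N\bigl(a|_{\rchi(N)}\bigr)}{\prod_{NN' \in E(\mathbf{T})} \nu_{NN'}\bigl(a|_{\rchi(N)\cap\rchi(N')}\bigr)},
\]
where $\nu_{NN'}$ is the common marginal on the separator $\rchi(N)\cap\rchi(N')$ guaranteed by (b) and $0/0$ is read as $0$ (local consistency guarantees this is the only indeterminate case, so we never divide a nonzero quantity by zero). A standard leaf-stripping induction on $\mathbf{T}$ then shows that $\mu$ is a probability distribution whose marginal on every bag $\rchi(N)$ equals $\mu_N$, so $\mu$ satisfies $\phi$ in the sense above; together with the realizability observation this gives a model of $\phi$ --- indeed the ``well-structured'' one with a hidden variable $U_N$ of range $D^{\rchi(N)}$ per node, $\P$ the junction-tree product, and $f_{V_i}$ reading the $V_i$-entry of $u_N$ for a fixed node $N \ni V_i$ (well-defined off a $\P$-null set, on which $f_{V_i}$ can be set arbitrarily) --- so every \yesinstance\ of $\cSAT{\prob}{\lin}$ admits such a model. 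Conversely, if $\gM \models \phi$ then the bag marginals of $\mu_\gM$ form a feasible point. The total running time is $|\phi|^{\bigoh(1)} \cdot d^{\bigoh(\tw(\phi))}$, which is of the form $f(\maxrange + \tw(\phi)) \cdot |\phi|^{\bigoh(1)}$ (hence $\FPT$ for the parameter $\maxrange + \tw(\phi)$) and, after invoking \cref{obs:bounded_range} to assume $d \leq |\phi| + 1$, is $|\phi|^{\bigoh(\tw(\phi))}$ (hence $\XP$ for $\tw(\phi)$ alone).

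\emph{The main obstacle} is the glueing step: showing that every locally consistent family of bag marginals is the family of bag marginals of a single global distribution on $D^n$. This is exactly where treewidth --- as opposed to a mere bound on term size --- is used, and it is what keeps the linear program ``fixed-parameter sized'': we never materialize the exponentially large joint distribution over $D^n$, and we entirely avoid the leaf-to-root dynamic programming usually associated with treewidth. The remainder is routine bookkeeping, namely checking that the reconstructed SCM is a legitimate recursive model and that its $\P$-null configurations --- on which the $f_{V_i}$ may disagree across bags --- do not affect the value of any term.
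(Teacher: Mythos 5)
Your LP is the same as the paper's: one real variable per bag--assignment, non-negativity and normalization per bag, consistency between adjacent bags of a nice tree decomposition, and the inequalities of $\phi$ rewritten over some bag containing each term's variables (which exists because each term's variable set is a clique of $G_\phi$). The completeness direction (model $\Rightarrow$ feasible point) and the runtime accounting, including invoking \cref{obs:bounded_range} to get the \XP\ bound in $\tw(\phi)$ alone, also match the paper. Where you genuinely diverge is the soundness direction. The paper turns a feasible point into an SCM constructively: it traverses the decomposition, introduces a hidden variable $U_{V\mid B=\barv}$ with conditional probabilities $p_{B'=\barv+x}/p_{B=\barv}$ for each bag--assignment with positive mass, and proves by induction along the traversal that the resulting model's bag marginals equal the LP values. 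You instead appeal to the junction-tree (local-to-global consistency) fact to glue the bag marginals into one distribution $\mu$ on $D^n$, and then realize $\mu$ by a single hidden variable of range $D^n$ whose coordinates the $f_{V_i}$ read off. Both routes are sound; yours leans on a standard but nontrivial external lemma (your ``leaf-stripping induction'' is essentially the same induction the paper performs explicitly, phrased distribution-first), while the paper's construction is self-contained and moreover outputs a compactly represented model, whereas your realization has a hidden variable of range size $d^n$ --- immaterial for the decision problem, which is all the theorem claims. One small caution: your parenthetical alternative SCM with one hidden variable $U_N$ per node and ``$\P$ the junction-tree product'' is not well-formed as stated, since the product formula is ambiguous on separator-inconsistent tuples; it is dispensable, as your main realization via $\mu$ already suffices.
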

\begin{proof}
Consider an instance of \cSAT{\prob}{\lin} with formula $\phi$ and domain $D$.
      We prove both statements simultaneously by describing an algorithm that runs in time $\maxrange^{f(\tw(\phi))}|\phi|^{\bigoh(1)}$, for a computable function $f$.
     Consider a nice tree decomposition of $G_\phi$ consisting of $\bigoh(n)$ nodes with maximum size $w \coloneqq \tw(\phi) +1$ computed by, e.g., the algorithm of~\cite{Bodlaender96}. Without loss of generality, assume that only the bags of leaf nodes are empty and ignore them in the following procedure (instead we consider their parents as leaves). For the remaining tree decomposition~$\mathbf{T}$, let $D^{|B|}$ be the combined domain of the variables of bag~$B$ in~$\mathbf{T}$.
      We construct the following Linear Program (LP).
      For each distinct bag $B$ and $\barv \in D^{|B|}$, construct an LP-variable~$p_{B=\barv}$ (note that distinct nodes might share a bag).
      This will capture the probability of the event $B=\barv$, that is, each variable in $B$ takes the respective value in $\barv$.
      To ensure a valid probability distribution over the LP-variables in each bag $B$, add the LP-constraints
      \begin{align*}
            p_{B=\barv} \ge 0
            & \text{\quad for each LP-variable $p_{B=\barv}$, \quad and }\\
             \sum\nolimits_{\overline{v}\in D^{|B|}} p_{B=\overline{v}} = 1
            & \text{\quad for each bag $B$}.
      \end{align*}
      For every pair of distinct bags $B, B'$ whose nodes are adjacent in~$\mathbf{T}$ and $B \neq B'$, note that there is some~$V\in \allV$ such that, without loss of generality, $B'= B \cup \set{V}$. To guarantee consistency between the probability distributions of $B$ and~$B'$, we add for each such pair and each $\barv \in D^{|B|}$ the LP-constraint
      \begin{equation*}
            p_{B=\barv} = \text{sum} \set{p_{B'=\overline{v}'} \mid \barv'\in D^{|B'|} \text{ and $\barv'$ sets $B$ to $\barv$}}.
      \end{equation*}

Next, for each constraint $C$ in $\phi$, consider each of its terms $\Pr(\varepsilon)$  and define $\gV_{\varepsilon} \subseteq \allV$ to be the set of variables that occur in $\Pr(\varepsilon)$.
By construction, for each $\varepsilon$, all variables in $\gV_{\varepsilon}$ form a clique in $G_{\phi}$. Consequently, there is at least one bag $B_{\varepsilon}$ in $\mathbf{T}$ such that $\gV_{\varepsilon} \subseteq B_{\varepsilon}$. Consider an arbitrary choice of such $B_{\varepsilon}$ and obtain an LP-constraint from $C$ by replacing each occurrence of term $\Pr(\varepsilon)$ by a sum over all LP-variables $p_{B_{\varepsilon}=\overline{v}}$ such that $B_{\varepsilon}=\barv$ satisfies the event~$\varepsilon$.
      Then the LP consists of $\bigoh(n\cdot \maxrange^w)$ LP-variables and
      $\bigoh(|\phi|+n\cdot \maxrange^w)$ LP-constraints. 
      An exemplary instance is constructed in \cref{exm:treewidth_const}.

      We can find a solution of an LP (or decide that there is none) in polynomial time with respect to its size, that is, the number of its variables plus constraints.
      Crucially, if $\phi$ is a \yesinstance witnessed by a model $\gM$ which induces a probability distribution over $\allV$, then the LP admits a solution; indeed, we can satisfy all constraints by setting each LP-variable $p_{B=\overline{v}}$ to the probability of the event~$B=\barv$ within that distribution.   
      
      For the converse, assume the LP has a solution. We construct a model for $\phi$ by passing through $\mathbf{T}$ in a breadth-first-search manner, starting from an arbitrary leaf node with some bag $B=\set{V}$. Let $U_V$ be a hidden variable with domain $D$ such that $\P(U_V=v)=p_{B=(v)}$ for all $v\in D$.
      Whenever we transition from a bag $B$ to a bag~$B'$ containing a variable $V$ which we have not yet described in our model, we have $B'=B\cup\set{V}$.
      For each $\barv\in D^{|B|}$ such that $p_{B=\barv} > 0$, create a hidden variable 
      $U_{V \mid B=\barv}$ with domain
      $\range(U_{V \mid B=\barv})= D$ 
      and let
      \begin{equation*}
            \P(U_{V \mid B=\barv} = x) \coloneqq \frac{p_{B'=\barv+x}}{p_{B=\barv}} \quad \text{ for each } x\in D,
      \end{equation*}
      where $(\barv+x) \in D^{|B'|}$ is such that it sets $B$ to $\barv$ and $V$ to $x$.
      This describes a valid probability distribution of $U_{V \mid B=\barv}$:
      As $\P(U_{V \mid B=\barv} = x) \ge 0$ for all $x$, it remains to show that 
      $\sum_{x\in D} \P(U_{V \mid B=\barv} = x) =~1$.
      We have
      \begin{align*}
           \sum_{x\in D} \P(U_{V \mid B=\barv} = x)
           = \sum_{x\in D} \frac{p_{B'=\barv+x}}{p_{B=\barv}}
           = \frac{1}{p_{B=\barv}} \sum_{x\in D} p_{B'=\barv+x}
           = 1,
      \end{align*}
      as we ensured $p_{B=\barv} =  \sum_{x\in D} p_{B'=\barv + x}$ by an LP-constraint.
      We now define the function $f_V$ such that, for each $\barv\in D^{|B|}$, if $B=\barv$ then $V = U_{V \mid B=\barv}$.
      
      It remains to argue that the model $\gM$ obtained after visiting every node witnesses $\phi$ to be a \yesinstance.
      To this end, employ induction over the breadth-first search described above to prove
      that after visiting a node with bag $B$, for each $\barv\in D^{|B|}$ the value of $p_{B=\barv}$ describes the probability of $B=\barv$ in the current model $\gM'$, that is, $\llbracket \Pr(B=\barv)\rrbracket_{\gM'} = p_{B=\barv}$.
      As the bag of the first node contains just one variable, the base case is trivial. Now assume that the claim holds for bag $B$ of some node $N$ and model $\gM$, and from $N$ we are visiting a node~$N'$ with bag~$B'$.
      If $B'\subseteq B$, then $\gM$ is not changed and the claim follows immediately.
      Otherwise $B' = B \cup \set{V}$ for a variable $V$ and $\gM$ is extended to a model $\gM'$ as described above.
      Consider any $\barv'\in D^{|B'|}$.
      Let $\barv$ equal $\barv'$ when restricted to the variables in $B$ and let~$x$ be the value of variable $V$ in $\barv'$.
      By the construction of $\gM'$ and the induction hypothesis, we have that $\llbracket \Pr(B=\barv)\rrbracket_{\gM'} =\llbracket \Pr(B=\barv)\rrbracket_{\gM} = p_{B=\barv}$. 
      If $\llbracket \Pr(B=\barv)\rrbracket_{\gM'} = 0$, then  $\llbracket \Pr(B'=\barv')\rrbracket_{\gM'} = 0$, which is correct by the consistency constraints $p_{B'=\barv'} \le  p_{B=\barv} = 0$. 
Otherwise, 
      \begin{align*}
            \llbracket \Pr(B'=\barv')\rrbracket_{\gM'}
            = \llbracket \Pr(B=\barv)\rrbracket_{\gM'} \cdot \P(U_{V\mid B=\barv} = x)
            = p_{B=\barv} \cdot \frac{p_{B'=\barv'}}{p_{B=\barv}} = p_{B'=\barv'}.
      \end{align*}

      Given a nice tree decomposition, the LP can be constructed and solved in time in $(|\phi|+n\cdot\maxrange^w)^{\bigoh(1)}$. For \yesinstances, this time also suffices to construct a suitable model.
\end{proof}

\begin{example}[Construction in \cref{thm:tw_lin}]
\label{exm:treewidth_const}
      Consider the following instance $\phi$ of \cSAT{\base}{\lin} with endogenous variables $\allV=\set{V_1, V_2, V_3, V_4}$ over domain $D=\set{0,1}$.
      \begin{align*}
            \Pr(V_1 = 1 \land V_3 = 1) &\ge \tfrac{1}{2}\\
            \Pr(V_2 = 1 \lor V_3 = 1) - 2\Pr(V_3 = 1 \lor V_4 = 1) &\ge 0\\
            \Pr(V_4 = 1) &\ge \tfrac{1}{3}
      \end{align*}
       \vspace{1mm}
 The corresponding primal graph $G_{\phi}$ and a nice tree decomposition of $G_{\phi}$ are as follows.     
      \begin{figure}[h]
            \label{fig:exampleTW}
            \centering
            \vspace{2mm}
            \includegraphics[scale=1.0, page=2]{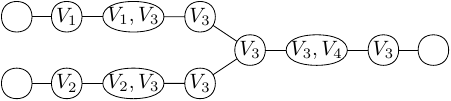}
            \hspace{1cm}
            \includegraphics[scale=1.0, page=1]{construction_tw_example.pdf}
      \end{figure}
      
 We construct a linear program as follows. For each non-empty bag, we create one LP-variable for each possible assignment of the variables in the bag. This yields variables $p_{V_i = 0}$ and  $p_{V_i = 1}$ for $i\in [3]$ as well as $p_{V_1 = x, V_3 = y}$ and $p_{V_2 = x, V_3 = y}$ and $p_{V_3 = x, V_4 = y}$ for all pairs $x,y\in \set{0,1}$. Note that for clarity we here write $p_{V_i = x, V_j = y}$ instead of $p_{B=(x,y)}$ with $B=\set{V_i, V_j}, i <j$. We introduce the following LP-constraints for $i\in [3], (a,b)\in \set{(1,3), (2,3), (3,4)},  x,y\in \set{0,1}$:
 \begin{align*}
      p_{V_i = 0} \ge 0,\quad  p_{V_i = 1} \ge 0,\quad  p_{V_i = 0} + p_{V_i = 1} &= 1,\\
      p_{V_a = x, V_b = y} &\ge 0,\\ 
       p_{V_a = 0, V_b=0}+p_{V_a = 0, V_b=1}+p_{V_a = 1, V_b=0}+p_{V_a = 1, V_b=1} &= 1.
 \end{align*}
 To ensure consistency between bags, add the following constraints.\\ For $(a,b)\in \{(1,3), (2,3), (3,4)\}, x\in\{0,1\}$ add 
 \begin{align*}
      p_{V_a=x} &= p_{V_a=x, V_b=0} + p_{V_a=x, V_b=1}
\end{align*}
and for $(a,b)\in \set{(1,3), (2,3)}, x\in\set{0,1}$ add 
\begin{align*}
      p_{V_b=x} &= p_{V_a=0, V_b=x} + p_{V_a=1, V_b=x}.
 \end{align*}
 Last, the following three LP-constraints encode the constraints in $\phi$:
 \begin{align*}
      p_{V_1 = 1, V_3=1} &\ge \tfrac{1}{2}\\
      p_{V_2 = 1, V_3=0}+p_{V_2 = 1, V_3=1}+p_{V_2 = 0, V_3=1} -
        2 (p_{V_3 = 1, V_4=0}+p_{V_3 = 1, V_4=1}+p_{V_3=0, V_4 = 1}) &\ge 0\\
      p_{V_3 = 0, V_4=1} + p_{V_3 = 1, V_4=1} &\ge \tfrac{1}{3}.
 \end{align*}
Here, $\phi$ is a \yesinstance of \cSAT{\base}{\lin} and satisfied by an SCM $\gM = (\allV, \set{U}, \gF, \P)$ such that $\P(U=1)=\frac{1}{2}$ holds\footnote{Here, for binary variables we just state the probability of one case; the probability for the other immediately follows.} as well as
 \begin{align*}
      f_{V_1}(U) \coloneqq U, \quad f_{V_2} \coloneqq 1,\quad
      f_{V_3}(V_1) \coloneqq V_1, \quad f_{V_4}(V_1) \coloneqq V_1,
 \end{align*}
 which corresponds to the LP-solution where all LP-variables have value $0$ except, for $i\in \set{1,3}$,
 \begin{align*}
      p_{V_2=1}& = 1,\\
      p_{V_i=0}=p_{V_i=1} = p_{V_1=0, V_3=0} = 
       p_{V_1=1, V_3=1} = p_{V_2=1, V_3=0} &= \\ p_{V_2=1, V_3=1} =
        p_{V_3=0, V_4=0} = p_{V_3=1, V_4=1} &= \tfrac{1}{2}.
 \end{align*}

  Likewise, this solution to the LP yields a model $\gM'=(\allV, \allU' \gF', \P')$ witnessing $\phi$ to be a \yesinstance, as constructed by passing through the tree decomposition in order $\set{V_1}, \set{V_1,V_3}, \set{V_3}, \set{V_3,V_4}, \set{V_2,V_3}, \set{V_2}$.
  We illustrate the first steps towards constructing $\gM'$. First, we introduce a hidden variable $U_{V_1}$ with $\P'(U_{V_1} = 1) = \frac{1}{2}$ and let $f'_{V_1}(U_{V_1}) \coloneqq U_{V_1}$.
  Next we construct hidden variables $U_{V_3\mid V_1=0}$ and $U_{V_3\mid V_1=1}$, where $\P'(U_{V_3\mid V_1=0}=1)=0$ and $\P'(U_{V_3\mid V_1=1}=1)=1$, and define
  \begin{equation*}
      f_{V_3}(V_1,U_{V_3\mid V_1=0},U_{V_3\mid V_1=1}) \coloneqq 
      \begin{cases}
            U_{V_3\mid V_1=0}, \text{ if } V_1=0;\\
            U_{V_3\mid V_1=1}, \text{ if } V_1=1.
      \end{cases}
  \end{equation*}
  Defining the remaining observed variables analogously yields an SCM $\gM'$ which satisfies $\phi$. 
\end{example}


We now show that under well-established complexity assumptions, parameterization by $d$ alone cannot yield tractability, even if the primal graph $G_{\phi}$ has bounded degree.

\begin{theorem}
\label{thm:baseprob_hard}
    \probcomp is \npcomplete even if $\maxrange = 2$ and the maximum degree of $G_\phi$ is $8$.
\end{theorem}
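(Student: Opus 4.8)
The plan is to obtain \NP-membership essentially for free and to establish \nphardness\ by a reduction from a bounded-occurrence variant of \threeSAT. Since \probcomp\ is a syntactic special case of \cSAT{\prob}{\lin}, which \citet{FaginHM90} proved to be in \NP, membership holds without further work. For hardness I would reduce from \ttSAT, the \NP-hard restriction of \threeSAT\ in which every clause has three distinct variables and every variable occurs in at most four clauses. The observation that drives the whole construction is that, on the interventionless layer, whether $\gM \models \phi$ depends only on the distribution that $\gM$ pushes forward onto the joint values of $\allV$: a point mass on a single assignment is realized by an SCM with no hidden variables and constant structural functions, and conversely every model induces such a distribution. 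Hence one does \emph{not} need to force the endogenous variables to be almost surely constant, and this is exactly what keeps the clause gadget small.

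Given $\Phi = \bigwedge_{i=1}^{m} C_i$ with $C_i = \ell_{i,1} \vee \ell_{i,2} \vee \ell_{i,3}$ over variables $\gV$, I would introduce one endogenous variable $V_v$ for each $v \in \gV$ over the binary domain $D = \set{0,1}$, encode a literal $v$ by the atom $V_v = 1$ and $\neg v$ by $V_v = 0$, and write $\overline{\ell}$ for the atom complementary to the one associated with $\ell$. The output instance is
\[
  \phi \;:=\; \bigwedge_{i=1}^{m} \Bigl(\, \Pr\bigl(\overline{\ell_{i,1}} \wedge \overline{\ell_{i,2}} \wedge \overline{\ell_{i,3}}\bigr) \;\le\; 0 \,\Bigr),
\]
where the right-hand side $0$ is realized---as permitted in the $\base$ fragment---by the probability of a contradictory event, e.g.\ $\Pr\bigl((V_{j}=0) \wedge (V_{j}=1)\bigr)$ for a variable $V_{j}$ already occurring in the same clause, so that no new edges appear in $G_\phi$. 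Then $\maxrange = 2$, the reduction runs in polynomial time, and the only edges of $G_\phi$ are, for each clause, the triangle on its three variables; since each variable lies in at most four clauses and each clause carries two further variables, every vertex of $G_\phi$ has degree at most $4 \cdot 2 = 8$.

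For correctness I would argue both directions. If $\alpha$ satisfies $\Phi$, take $\gM = (\allV, \emptyset, \gF, \emptyset)$ with constant functions $f_{V_v} \coloneqq \alpha(v)$: since each $C_i$ has a literal true under $\alpha$, the event $\overline{\ell_{i,1}} \wedge \overline{\ell_{i,2}} \wedge \overline{\ell_{i,3}}$ is unsatisfiable under $\gM$ and has probability $0$, so $\gM \models \phi$. Conversely, a model $\gM \models \phi$ induces a distribution $\mu$ on the joint assignments of $\{V_v\}_{v \in \gV}$ for which $\llbracket \Pr(\varepsilon) \rrbracket_{\gM}$ equals the $\mu$-mass of the assignments satisfying $\varepsilon$; the $i$-th constraint then forces $\mu$ to place zero mass on every assignment falsifying $C_i$, so $\mu$ is supported on satisfying assignments of $\Phi$, and since $\mu$ has total mass $1$ such an assignment exists.

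I expect all of this to be routine once the right viewpoint is fixed; the single point that requires care---and the one I would flag as the crux---is the structural observation in the first paragraph. It is precisely because satisfiability reduces to the existence of a \emph{distribution} (rather than of a deterministic assignment) that a clause can be encoded by a \emph{single} $\base$ inequality over just its three variables: unlike in the proof of \cref{thm:bool}, we have neither disjunctions of inequalities available to ``pin down'' a Boolean assignment, nor the option of a global consistency term (which would destroy the degree bound), so it is essential that no such pinning is needed.
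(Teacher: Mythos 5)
Your proposal is correct and follows essentially the same route as the paper: a reduction from bounded-occurrence \threeSAT{} with one base-level constraint per clause forcing the clause to hold with probability $1$ (you phrase it as the complementary conjunction having probability $\le 0$, the paper as the disjunction having probability $1$), the identical degree-$8$ count, and a converse that—like the paper's choice of a positive-probability hidden assignment—just reads a satisfying assignment off the support of the induced distribution on $\allV$. The minor presentational differences (contradictory event encoding of the constant $0$, support argument) do not change the substance.
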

\begin{proof}
The containment in $\NP$ follows from $\mathsf{arb}\probcomp\in \NP$~\citep{FaginHM90}.
To show hardness in our restricted setting, we perform a reduction from \threeSAT. Note that \threeSAT remains \nphard when restricted to formulas in which each variable occurs exactly twice negated and twice non-negated~\citep{DarmannD21}. Thus, w.l.o.g., we assume that our formula $\Phi:=\bigwedge_i C_i$ with $C_i:= \bigvee_{j \in [3]} \ell_{i_j}$ over variables $\gV$ has this property. 
We construct an instance~$\phi$ of \probcomp with $\allV=\set{V_v\mid v\in \gV}$ and $D=\set{0,1}$ such that for each $C_i \in\Phi$, the constraint $\Pr(\bigvee_{j \in [3]}g(\ell_{i_j}))=1$ is added to $\phi$, where $g(\ell_{i_j})$ is replaced by $V_v = 1$ if $\ell_{i_j}=v$, and by $V_v = 0$ if $\ell_{i_j}=\neg v$.
Since each variable occurs in at most~4 clauses, there are at most 8 other variables it co-occurs with; consequently, $G_{\phi}$ has a maximum degree of 8. 

    We now argue that $\Phi$ is satisfied if and only if $\phi$ admits an SCM. 
     
     Suppose there exists an assignment $\alpha: \gV \rightarrow \{0,1\}^{|\gV|}$ that satisfies $\Phi$. 
     We construct a model $\gM$ satisfying $\phi$ as in the proof of \cref{thm:bool}. The proof that $\gM$ satisfies all constraints in $\phi$ is analogous.
     
For the other direction, suppose there exists a model $\gM=(\allV, \allU, \gF, \P)$ satisfying $\phi$. From the existence of $\P$, we conclude that there exists an assignment $\alpha_{\allU}$ to $\allU$ that has a non-zero probability. Moreover, by construction, fixing such an $\alpha_{\allU}$ yields a full assignment $\alpha_{\allV}$ to $\allV$ of non-zero probability. We obtain an assignment $\alpha_{\Phi}: \mathcal{V} \rightarrow \{0,1\}^{|\mathcal{V}|}$ by enforcing $\alpha_{\Phi}(v) = 0 \Leftrightarrow \alpha_{\allV}(V_v) = 0$ for each variable $v \in \mathcal{V}$.
We claim that $\alpha_{\Phi}$ satisfies all clauses in $\Phi$. Suppose the contrary, that is, there exists a clause $C_i= \bigvee_{j \in [3]} \ell_{i_j}$ in $\Phi$ that is not satisfied by $\alpha_{\Phi}$. Then $\bigvee_{j \in [3]}g(\ell_{i_j})$ is not satisfied by $\alpha_{\allV}$. However, since $\alpha_{\allV}$ has non-zero probability, it follows that  $\llbracket \Pr(\bigvee_{j \in [3]}g(\ell_{i_j}))\rrbracket_{\gM}<1$, thus, $\gM$ fails to satisfy all constraints in $\phi$ which contradicts the fact that it is a model. We can therefore conclude that $\alpha_{\Phi}$ is a satisfying assignment for~$\Phi$.
     \end{proof}
The following result complements Theorem~\ref{thm:baseprob_hard} by ruling out fixed-parameter tractable algorithms for \probcomp under a different parameterization, namely the number~$n$ of variables. 
Note that since $\tw(\phi)\leq n$, this implies that we should not expect the primal treewidth of a graph to yield fixed-parameter tractability for \probcomp alone. 

 

\begin{theorem}
\label{thm:w_hard}
      \probcomp is \wonehard w.r.t.\ $n$.
\end{theorem}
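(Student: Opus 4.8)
The plan is to prove that \probcomp is \wonehard parameterized by $n$ via a polynomial-time parameterized reduction from \MCC, which is \wonehard when parameterized by the number $k$ of color classes~\citep{CyganFKLMPPS15}. Let $G$ be a graph whose vertex set is partitioned into nonempty color classes $V_1,\dots,V_k$ (if some $V_i$ is empty the instance is trivially a \noinstance). I would build an instance of \probcomp with exactly $n=k$ endogenous variables $\allV=\set{X_1,\dots,X_k}$ and domain $D=V(G)$, the idea being that an assignment to $\allV$ selects one vertex of each color and that the constraints of $\phi$ force these vertices to form a clique. Since $n=k$ equals the source parameter while $|D|$ and $|\phi|$ remain polynomial in $|G|$, this is a legitimate parameterized reduction; note that the resulting primal graph is the complete graph $K_k$, so $\tw(\phi)=n-1$.

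The formula $\phi$ contains two kinds of constraints, all lying in $T_{\base}(\eprop)$ since comparisons against $0$ are expressible via a contradictory event. First, for every color $i\in[k]$ a \emph{domain constraint} $\Pr\big(\bigvee_{v\in D\setminus V_i}(X_i=v)\big)\le 0$, forcing $X_i$ to take a value of its own color. Second, for every pair $i<j$ a single \emph{non-edge constraint} $\Pr\big(\bigvee_{\set{a,b}\notin E(G)}(X_i=a\wedge X_j=b)\big)\le 0$, where $a$ ranges over $V_i$ and $b$ over $V_j$; this forbids the vertices selected for colors $i$ and $j$ from being non-adjacent. Altogether there are $k+\binom{k}{2}$ constraints, of size polynomial in $|G|$.

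For correctness, the forward direction is immediate: from a multicolored clique $\set{c_1,\dots,c_k}$ with $c_i\in V_i$, take the SCM with no hidden variables and constant functions $f_{X_i}\coloneqq c_i$; then every event appearing in a domain or non-edge constraint has probability $0$ (no disjunct is ever realized, using $c_i\in V_i$ and $\set{c_i,c_j}\in E(G)$), so $\gM\models\phi$. For the converse, suppose $\gM=(\allV,\allU,\gF,\P)\models\phi$; since $\range(\allU)$ is finite, pick some $\baru\in\range(\allU)$ with $\P(\baru)>0$ and let $\alpha$ be the assignment to $\allV$ obtained by evaluating $\gF$ on $\baru$, exactly as in the proof of \cref{thm:baseprob_hard}. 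If $\alpha$ realized any event occurring in a constraint of $\phi$, that event would have probability at least $\P(\baru)>0$, contradicting $\gM\models\phi$; hence $\alpha(X_i)\in V_i$ for every $i$ and $\set{\alpha(X_i),\alpha(X_j)}\in E(G)$ for all $i<j$. As the classes $V_i$ are pairwise disjoint, the values $\alpha(X_1),\dots,\alpha(X_k)$ are distinct, so they form a multicolored $k$-clique of $G$.

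I do not anticipate a genuine technical obstacle: the construction is short and the probabilistic reasoning reuses the ``positive-probability witness'' argument already used in the proof of \cref{thm:baseprob_hard}. The points requiring a little care are bookkeeping ones — checking that the instance uses exactly $k$ endogenous variables (so the new parameter is a function of the old one) and that the non-edge conditions can be collapsed into a single disjunction per color pair without blowing up $|\phi|$. It is also worth stating explicitly that the domain $D$ must be allowed to grow with $|G|$: this is unavoidable, since by \cref{thm:var_causal} the problem becomes fixed-parameter tractable once both $n$ and the domain size are bounded, so any proof showing it \wonehard for the parameter $n$ alone must exploit a large domain.
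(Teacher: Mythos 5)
Your reduction is correct and is essentially the paper's own proof: a parameterized reduction from \MCC with one endogenous variable per color, domain equal to the vertex set, zero-probability constraints ruling out wrong colors and non-adjacent pairs, constant functions in the forward direction, and the positive-probability witness argument in the converse. The only deviation is that you bundle the forbidden configurations into one disjunctive event per color and per color pair, whereas the paper uses one constraint per vertex and per non-edge; this is a cosmetic difference since disjunctions are expressible in $\eprop$ via negation and conjunction.
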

\begin{proof}
      We reduce from \MCC, which asks, given a properly vertex-colored graph $G$ with colors $1, \ldots, k$ and vertices $v_1, \ldots, v_{r}$, whether $G$ contains a $k$-clique.
      Given $G$, we construct an instance $\phi$ of \probcomp as follows. Let $\allV = \set{V_1, \ldots, V_k}$ and $D = \set{v_1, \ldots, v_r}$. 
      For each~$i\in [k]$ and $a\in [r]$, add the constraint $\Pr(V_i = v_a) \le 0$, unless $v_a$ has color $i$.
      For each non-adjacent $v_a, v_b$ with $a < b$ and colors $i$, $j$, add the constraint $\Pr(V_i = v_a \land V_j = v_b) \le 0$.
      The construction takes polynomial time and sets $n = k$.

      Suppose $G$ contains a clique of size $k$. Let $\alpha\colon [k] \rightarrow \set{v_1, \ldots, v_r}$ be such that for every $i\in [k]$ the clique contains vertex $\alpha(i)$ of color $i$. Consider the model $(\allV, \emptyset,\gF,\emptyset)$, where $\F$ is such that $f_{V_i}\coloneqq \alpha(i)$ is a constant function for each $V_i\in\allV$. 
      Clearly, this satisfies each constraint of the form $\Pr(V_i = v_a) \le 0$. 
      Assume this would not satisfy a constraint of the form $\Pr(V_i = v_a \wedge V_j = v_b) \le 0$. 
      Then $\alpha(i) = v_a$ and $\alpha(j)=v_b$, so both~$v_a$ and $v_b$ are in the clique and thereby adjacent, which contradicts the existence of the constraint.

      For the other direction, assume there is a model \gM satisfying the \probcomp instance. Then there is at least one assignment $\barv \in D^k$ such that $\llbracket \Pr(\allV = \barv)\rrbracket_{\gM} > 0$. Let $\alpha\colon [k] \rightarrow \set{v_1,\ldots, v_r}$ be such that for each $i\in [k]$ we have $V_i = \alpha(i)$ in this assignment. 
      We argue that the vertices $\alpha(1), \ldots, \alpha(k)$ form a clique.
      Towards a contradiction, assume there are $i,j\in [k], i\neq j$ such that $\alpha(i)$ and $\alpha(j)$ are non-adjacent. Then there is a constraint 
      $\Pr(V_i = \alpha(i) \land  V_j = \alpha(j)) \le 0$, which contradicts the model being a solution to the instance as {$\llbracket \Pr(V_i = \alpha(i) \land V_j = \alpha(j))\rrbracket_{\gM} \ge \llbracket \Pr(\allV = \barv)\rrbracket_{\gM} > 0$}.
\end{proof}

\section{Linear Inequalities over Causal or Counterfactual Expressions}\label{sec:causal}

In this section, we turn our attention to interventional causal reasoning. We initiate our study by showing that the \FPT-tractability that was established in Theorem~\ref{thm:tw_lin} does not carry over.
\begin{theorem}
\label{thm:causal_hard}
      \cSAT{\causal}{\lin} is \npcomplete even if $\maxrange = 2$ and all edges in $G_\phi$ are pairwise vertex-disjoint. 
\end{theorem}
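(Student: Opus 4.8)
The plan is to prove containment in $\NP$ and $\NP$-hardness separately. Containment in $\NP$ is known: $\cSAT{\causal}{\lin}$ lies in $\NP$ by~\cite{MosseII24}, and restricting the inputs to sets of inequalities (rather than arbitrary Boolean combinations) cannot increase the complexity. It therefore suffices to prove $\NP$-hardness already under the stated structural restriction, and for this I would reduce from \threeSAT.

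Given a 3-CNF formula $\Phi = \bigwedge_j C_j$ over Boolean variables $x_1,\dots,x_m$ with $C_j = \ell_{j,1}\vee \ell_{j,2}\vee\ell_{j,3}$, I would build an instance $(\phi, D)$ of $\cSAT{\causal}{\lin}$ with $D=\{0,1\}$ as follows. For each $i\in[m]$ introduce three endogenous variables forming a path $A_i - B_i - C_i$ (with $B_i$ in the middle), and add a constant-size set $\Sigma_i$ of \emph{interventional} constraints whose terms mention only adjacent pairs of this path --- e.g.\ terms such as $\Pr(B_i=b)$, $\Pr([A_i=a]\,B_i=b)$, $\Pr([B_i=b]\,A_i=a)$, $\Pr([B_i=b]\,C_i=c)$. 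The set $\Sigma_i$, the ``variable gadget'', is engineered so that every SCM satisfying $\Sigma_i$ forces a designated quantity $\mathrm{val}_i$ on the component --- most naturally $\mathrm{val}_i=\Pr(B_i=1)$, but possibly an interventional term --- into $\{0,1\}$, while both values remain realizable. For each clause $C_j$ add the single linear constraint $\mathrm{val}(\ell_{j,1})+\mathrm{val}(\ell_{j,2})+\mathrm{val}(\ell_{j,3})\ge 1$, where $\mathrm{val}(x_i)=\Pr(B_i=1)$ and $\mathrm{val}(\neg x_i)=\Pr(B_i=0)$ (complements are available for binary domain). Since each summand is a single-variable term, this constraint contributes no edge to the primal graph, so $G_\phi$ is exactly the disjoint union of the paths $A_i-B_i-C_i$, and $\maxrange=2$. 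Correctness then follows the usual two-directional pattern: a satisfying assignment $\alpha$ of $\Phi$ yields an SCM by instantiating each gadget in the configuration with $\mathrm{val}_i=\alpha(x_i)$ (every clause constraint then holds because $\alpha$ satisfies some literal per clause), and conversely any SCM $\gM\models\phi$ induces the $0/1$-assignment $\alpha(x_i):=\mathrm{val}_i$, which satisfies $\Phi$ by the clause constraints.

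The technical core --- and the step I expect to be the main obstacle --- is the design and analysis of the gadget $\Sigma_i$. The delicate point is that for any \emph{fixed} $\prec$-order of $A_i,B_i,C_i$, the set of vectors of probabilistic and interventional term-values realizable by SCMs on this component is convex: a mixture of two such SCMs through a fresh exogenous ``selector'' variable realizes every convex combination of their term-values, and this construction is compatible with interventions. Hence no set of linear constraints can pin $\mathrm{val}_i$ to the non-convex set $\{0,1\}$ within a single ordering regime, and $\Sigma_i$ must exploit the one genuinely combinatorial choice an SCM makes on the component, namely the relative $\prec$-order of its three variables. One arranges the interventions so as to play off two effects against each other --- intervening on a variable that $\prec$-precedes $B_i$ leaves $\Pr(B_i=\cdot)$ unchanged, whereas intervening on a variable that $f_{B_i}$ functionally depends on pins the corresponding branch of $f_{B_i}$ --- and then verifies, by a finite case distinction over the $3!=6$ orderings, that (i) every ordering consistent with $\Sigma_i$ forces $\mathrm{val}_i$ to a single element of $\{0,1\}$, (ii) some ordering yields $0$ and some yields $1$, and (iii) no remaining ordering leaves $\mathrm{val}_i$ unconstrained. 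Once the gadget works, the remaining bookkeeping --- bounding $|\phi|$, confirming the shape of $G_\phi$, and lifting the per-component guarantees to the whole instance using that SCMs over disjoint components decompose as a product --- is routine.
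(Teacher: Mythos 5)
The reduction skeleton (per-variable gadget on a short path, plus clause constraints $\sum_j \mathrm{val}(\ell_j)\ge 1$ over single-variable terms) is sound, but the proposal has a genuine gap: the gadget $\Sigma_i$, which you yourself identify as the technical core, is never constructed or verified. Everything hinges on exhibiting a constant-size set of interventional constraints over a path that (i) forces $\mathrm{val}_i\in\{0,1\}$ in every feasible $\prec$-ordering, (ii) realizes both values, and (iii) leaves no ordering in which $\mathrm{val}_i$ is unconstrained; your own convexity observation shows this cannot be done within a single ordering regime, and natural first attempts (e.g.\ $\Pr([A_i{=}1]\,B_i{=}1)=1$ together with $\Pr([C_i{=}1]\,B_i{=}1)=0$) fail exactly at point (iii), in the ordering where $B_i$ comes last. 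Without the explicit gadget and the case analysis over the orderings, there is no hardness proof.

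It is also worth noting that the requirement you impose is stronger than necessary, and the paper's proof avoids it entirely. There, each Boolean variable $x_i$ gets just two endogenous variables $V_i,\overline{V}_i$ with the constraints $\Pr([V_i{=}1]~\overline{V}_i{=}1)=0$ and $\Pr([\overline{V}_i{=}1]~V_i{=}1)=0$. These do \emph{not} pin $\Pr(V_i{=}1)$ to $\{0,1\}$; they only guarantee that whichever of the pair is $\prec$-earlier has probability $0$ of equaling $1$ (since intervening on the later variable cannot affect it), so at most one of $\Pr(V_i{=}1),\Pr(\overline{V}_i{=}1)$ is positive. The truth assignment is then read off the causal order ($x_i=1$ iff $\overline{V}_i\prec V_i$), and the clause constraints $\Pr(L_1{=}1)+\Pr(L_2{=}1)+\Pr(L_3{=}1)\ge 1$ force some positive term per clause, which must correspond to a literal made true by that assignment; conversely a satisfying assignment yields a deterministic SCM (e.g.\ $f_{\overline{V}_i}:=0$, $f_{V_i}:=1-\overline{V}_i$ when $\alpha(x_i)=1$). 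If you want to salvage your write-up, either complete and verify your three-variable gadget, or replace it with this weaker ``one of the pair is forced to probability $0$'' gadget, which makes the whole ordering case analysis two lines instead of six cases.
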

\begin{proof}
Fagin et al.~\cite{FaginHM90} showed that $\mathsf{arb}\cSAT{\causal}{\lin}$ is \npcomplete. 
      To show \nphardness even under the stated restrictions, we reduce from \threeSAT. Consider a \threeSAT formula $\Phi$ with $r$ variables.
      We define an instance $\phi$ of \cSAT{\causal}{\lin} with domain $D=\set{0,1}$ as follows.
      For each variable $x_i \in \Phi$, we introduce endogenous variables $V_i$ and~$\overline{V}_i$,
       as well as the constraints
      \begin{align*}
            \Pr([V_i=1]~\overline{V}_i=1)=0, \text{ and }
            \Pr([\overline{V}_i=1]~V_i=1)=0.
      \end{align*}
    Furthermore, for each clause $\ell_1 \vee \ell_2 \vee \ell_3$ in $\Phi$, we add
     $\Pr(L_1 = 1) + \Pr(L_2 = 1) + \Pr(L_3 = 1)  \ge 1$ to $\phi$, where, $L_j = V_i$ if $\ell_j = x_i$, and $L_j = \overline{V}_i$ if $\ell_j = \overline{x}_i$ for $j\in [3]$.
Note that $G_\phi$ consists only of edges between $V_i$ and $\overline{V}_i$, for $i\in[r]$.

We now argue that $\Phi$ is satisfiable if and only if there is a model for $\phi$.
Suppose there exists an assignment $\alpha: \mathrm{Var}(\Phi) \rightarrow \{0,1\}^r$ under which $\Phi$ is satisfied. 
     We construct a model $\gM=(\allV, \emptyset, \gF, \emptyset)$ for $\phi$ as follows: First, note that by construction, $\allV = \set{V_i, \overline{V}_i\mid i\in [r]}$.
     For each $i\in [r]$, if $\alpha(x_i)= 1$ then let $\gF$ be such that $f_{\overline{V}_i} \coloneqq 0$, which satisfies $\Pr([V_i=1]~\overline{V}_i=1)=0$, and $f_{V_i} \coloneqq 1-\overline{V}_i$, which satisfies $\Pr([\overline{V}_i=1]~V_i=1)=0$.
     If instead $\alpha(x_i)= 0$, let $f_{V_i} \coloneqq 0$ and $f_{\overline{V}_i} \coloneqq 1-V_i$, which analogously satisfies both constraints.
      This yields an SCM $\gM$. 
      It remains to show that each clause-constraint is satisfied.
      Consider a clause $\ell_1 \vee \ell_2 \vee \ell_3$ and let, without loss of generality, $\ell_1$ be \True in $\alpha$.
      Assume that $\ell_1 = x_i$ for some $i\in[r]$ (the case of $\ell_1 = \overline{x}_i$ holds analogously).
      Then, in the model without interventions, it holds that $\overline{V}_i = 0$ and $V_i = 1- 0 = 1$ with probability $1$, in other words, $\llbracket \Pr(V_i = 1)\rrbracket_{\gM}=1$. As $\Pr(V_i = 1)$ is one of the summands in the constraint for clause $i$ and the other summands are non-negative, this satisfies the clause constraint.

      For the other direction, suppose there exists a model $\gM = (\allV, \allU, \gF, \P)$ for $\phi$ with an associated well-order~$\prec$ over $\allV$.
      Consider the assignment $\alpha$ obtained by letting $x_i = 1$ if $\overline{V}_i \prec V_i$ and $x_i=0$, otherwise. 
      Note that if $\overline{V}_i \prec V_i$ then $f_{\overline{V}_i}$ does not depend on $V_i$ and thus the constraint $\Pr([V_i=1]~\overline{V}_i=1)=0$ implies $\llbracket \Pr(\overline{V}_i=1)\rrbracket_{\gM}=0$.
      Vice versa, if $V_i \prec \overline{V}_i$, we have $\llbracket \Pr(V_i=1)\rrbracket_{\gM}=0$.
      As for each clause $\ell_1 \vee \ell_2 \vee \ell_3$ we have that $\Pr(L_1 = 1) + \Pr(L_2 = 1) + \Pr(L_3 = 1)  \ge 1$, there is $j\in [3]$ such that $L_j$ does \emph{not} precede its counterpart and thus $\ell_i$ is set to \True by $\alpha$.
\end{proof}

We contrast the hardness obtained in \cref{thm:causal_hard} by considering the number of variables in $\allV$ as a new parameter. Towards this goal, \cref{lem:nice_solution} establishes the existence of a well-structured model for every \yesinstance.

\begin{lemma}\label{lem:nice_solution}
      Let $\phi$ over domain $D$ be a \yesinstance of $\cSAT{\counterfact}{\poly}$ over variables $\allV$. 
      There is an ordering $V_1, \ldots, V_{n}$ of $\allV$ such that the $\phi$ is satisfied by a model $\gM = (\allV, \allU, \gF, \P)$ with the following properties:
      for each $i\in [n]$, let $Q_i$ be the set of all possible functions mapping the values of $V_1, \ldots, V_{i-1}$ to a value of $V_i$, that is, the set of all functions from $D^{i-1}$ to $D$ (with $Q_1$ simply being a set of constant functions).
      Then
      \begin{itemize}
            \item  $\allU = \set{U}$ with $\range(U) = Q_1 \times \ldots \times Q_{n}$, where $U[i]\in Q_i$ denotes the $i^{\text{th}}$ entry in $U$; and
            \item $\gF = \{f_{V_i} \mid i \in [n]\}$ with
            $f_{V_i}(U, V_1, \ldots, V_{i-1}) \coloneqq U[i](V_1, \ldots, V_{i-1})$.
      \end{itemize}
\end{lemma}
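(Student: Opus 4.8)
The plan is to take an arbitrary model witnessing that $\phi$ is a \yesinstance and re-encode all of its randomness into a single ``response-function'' variable, a transformation in the spirit of the canonical-SCM normal form. Concretely, let $\gM' = (\allV, \allU', \gF', \P')$ be a model with $\gM' \models \phi$, and let $\prec$ be its implicit well-order; I fix the ordering $V_1, \ldots, V_n$ in the statement to be exactly $\prec$, so that $\allV_{\prec V_i} = \set{V_1, \ldots, V_{i-1}}$ for every $i$. For each hidden assignment $\baru' \in \range(\allU')$ and each $i \in [n]$, reading off $f'_{V_i}$ at $\baru'$ yields a \emph{response function} $g_i^{\baru'} \in Q_i$ given by $g_i^{\baru'}(v_1, \ldots, v_{i-1}) \coloneqq f'_{V_i}(\baru', v_1, \ldots, v_{i-1})$; this is well defined because $f'_{V_i}$ depends only on $\allU'$ and $\allV_{\prec V_i}$. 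Put $\Psi(\baru') \coloneqq (g_1^{\baru'}, \ldots, g_n^{\baru'}) \in Q_1 \times \cdots \times Q_n = \range(U)$ and let $\P$ be the pushforward of $\P'$ along $\Psi$, i.e.\ $\P(u) \coloneqq \sum_{\baru' \colon \Psi(\baru') = u} \P'(\baru')$; in particular $\P(u) = 0$ whenever $u$ lies outside the image of $\Psi$. Together with the functions $f_{V_i}$ prescribed in the statement this defines the candidate model $\gM$, and $\P$ is a genuine probability distribution, being a pushforward of one.

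The heart of the argument is the following behavioral-equivalence claim: for every $\baru' \in \range(\allU')$ and every intervention $\gamma \in \eint$, evaluating $\gF'_{\gamma}$ on input $\baru'$ and evaluating $\gF_{\gamma}$ on input $\Psi(\baru')$ produce the \emph{same} assignment to $\allV$. I would prove this by induction along $\prec$. For $V_i$: if $V_i$ is pinned by an atom of $\gamma$, both models output that value; otherwise, by the induction hypothesis both evaluations have already assigned the same values $v_1, \ldots, v_{i-1}$ to $V_1, \ldots, V_{i-1}$, and then $\gM$ outputs $f_{V_i}(\Psi(\baru'), v_1, \ldots, v_{i-1}) = (\Psi(\baru'))[i](v_1, \ldots, v_{i-1}) = g_i^{\baru'}(v_1, \ldots, v_{i-1}) = f'_{V_i}(\baru', v_1, \ldots, v_{i-1})$, which is precisely the value $\gM'$ outputs (the base case $i=1$ being the instance of this step with no predecessors).

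From the claim, the semantics of $\efull$ gives $\gF', \baru' \models [\gamma]\varepsilon' \iff \gF, \Psi(\baru') \models [\gamma]\varepsilon'$ for all $\gamma \in \eint$ and $\varepsilon' \in \eprop$; since an $\ecounterfact$-event is merely a Boolean combination of such statements evaluated at one fixed hidden input, it follows that $\gF', \baru' \models \varepsilon \iff \gF, \Psi(\baru') \models \varepsilon$ for every $\varepsilon \in \ecounterfact$. Hence $\baru' \in S_{\gM'}(\varepsilon)$ iff $\Psi(\baru') \in S_{\gM}(\varepsilon)$, and because $\P$ is supported on the image of $\Psi$, grouping the mass of $\P'$ over the fibres of $\Psi$ yields $\llbracket \Pr(\varepsilon) \rrbracket_{\gM} = \llbracket \Pr(\varepsilon) \rrbracket_{\gM'}$ for every event $\varepsilon$. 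A routine induction over the structure of $\tpoly$-terms then extends this to $\llbracket \bft \rrbracket_{\gM} = \llbracket \bft \rrbracket_{\gM'}$ for every term $\bft$, so $\gM$ satisfies exactly the same (in)equalities as $\gM'$; in particular $\gM \models \phi$, and $\gM$ has the required shape.

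The construction itself is standard, so I expect no serious difficulty there; the step requiring the most care is the behavioral-equivalence induction, which must be carried out uniformly over all interventions $\gamma$ at once, since different conjuncts of an $\ecounterfact$-event may refer to different interventions applied to the same $\baru'$ — one cannot fix $\gamma$ first and reason about a single run. A minor point to confirm is that the doubly-exponential size of $\range(U) = Q_1 \times \cdots \times Q_n$ is harmless here, as the lemma is a purely structural statement carrying no efficiency requirement.
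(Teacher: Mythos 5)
Your proposal is correct and follows essentially the same route as the paper's proof: you re-encode the hidden randomness of an arbitrary witnessing model into response functions (your map $\Psi$ corresponds exactly to the paper's partition of $\range(U')$ into classes $C_q$), push the distribution forward, and argue that every event---even under interventions---receives the same probability. Your explicit induction along $\prec$ merely spells out in more detail what the paper asserts when it says all $u$ in a class yield identical values for $\allV$ under any intervention, so no changes are needed.
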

\begin{proof}
      Let $\gM' = (\allV, \allU', \gF', \P')$ be any model witnessing that $\phi$ is a \yesinstance.
      Without loss of generality, we assume $\allU'$ to consist of a single variable $U'$: If there were multiple variables $U_1, U_2,\ldots, U_{\ell}$ in $\allU'$, we could replace them by some $U'$ with $\range(U') = \range(U_1)\times \range(U_2) \times \ldots \times \range(U_{\ell})$, and update $\P'$ and $\gF'$ accordingly.

      Consider an ordering $V_1, \ldots, V_{n}$ of $\allV$ respecting the well-order $\prec$ of $\gM'$. For each $i\in [n]$, $f'_{V_i}\in\gF$ describes $V_i$ as a function of $U'$ and $V_1, \ldots, V_{i-1}$.
      Partition $\range(U')$ such that there is a class $C_q$ for each $q=(q_1,\ldots,q_{n})\in (Q_1 \times \ldots \times Q_{n})$ and let it contain $u\in \range(U')$ if and only if for all $i \in [n]$ and $\overline{s}\in D^{i-1}$, we have that $f'_{V_i}(u,\overline{s})=q_i(\overline{s})$.
      Now each $u\in \range(U')$ belongs to precisely one class~$C_q$.

      We construct the model $\gM = (\allV, \allU, \gF, \P)$ where $\allU$ and~$\gF$ are defined as specified above and $\P$ is such that for each $q\in \range(U)$ we have $\P(U=q) = \sum_{u'\in C_q} \P'(U'=u')$ (with $\P(U=q)=0$ if $C_q=\emptyset$).
      Note that this yields a well-defined probability distribution over $U$. 
     The model~$\gM$ satisfies $\phi$ since every term~$\Pr(\varepsilon)$ over~$\allV$ has the same probability in $\gM$ and $\gM'$.
     Indeed, for every class $C_q$ and each event $\varepsilon$, we have that $\gF', u \models \varepsilon$ either for all $u\in C_q$ or no $u \in C_q$, as by definition all these $u$ result in the exact same values for the variables in $\allV$, even under interventions. 
     Furthermore, $\gF$ is such that $\gF, q \models \varepsilon$ if and only if $\gF', u \models \varepsilon$ for all $u\in C_q$.
     For any event $\varepsilon \in \ecounterfact$, recall that $S_{\gM}(\varepsilon)\subseteq \range(U)$ and $S_{\gM'}(\varepsilon) \subseteq \range(U')$ denote the sets of values of hidden variables such that $\varepsilon$ happens in the respective model.
     We proved that $S_{\gM'}(\varepsilon) = \bigcup_{q\in S_{\gM}(\varepsilon)} C_q$ and thus, by definition of $\P$, event $\varepsilon$ happens in both models with the same probability, that is, $\llbracket \Pr(\varepsilon)\rrbracket_{\gM} =\llbracket \Pr(\varepsilon)\rrbracket_{\gM'}$.
     Hence, $\gM$ witnesses $\phi$ to be a \yesinstance as well.
\end{proof}

\begin{theorem}\label{thm:var_causal}
      \cSAT{\counterfact}{\lin} is in $\FPT$ w.r.t.\ the combined parameter $\maxrange + n$.
\end{theorem}
\begin{proof}
      Given an instance $\phi$ over domain $D$, we perform the following for each of the $n!$ orders of variables. If we do not find a model for any of these orders, we reject the instance.
      Consider a total order $V_1,\ldots,V_{n}$ of $\allV$ and 
      let $U$, $\gF$, and the $Q_i$ be defined as in \cref{lem:nice_solution}.
      We test whether~$\phi$ admits a model as described in \cref{lem:nice_solution} with respect to that order using the following LP.
      Create an LP-variable~$p_q$ for each $q\in Q_1\times \ldots \times Q_{n}$. 
      These represent the probability distribution over $U$, so we add an LP-constraint $\sum_{q\in Q_1\times \ldots \times Q_{n}} p_q = 1$ and, for each such~$q$, we add an LP-constraint $p_q \ge 0$.
      Furthermore, using \cref{obs:model_eval}, we transform each constraint in $\phi$ into an LP-constraint by replacing each term $\Pr(\varepsilon)$ by a sum over all variables~$p_q$ for which $\gF, q \models \varepsilon$.
      We accept the instance~$\phi$ if and only if there is at least one ordering of $\allV$ for which the constructed LP has a solution. In each LP there are $\bigoh(|\phi|+ |Q_1\times \ldots \times Q_{n}|)$ LP-constraints and the number of variables is at most
      \begin{align*}
            |Q_1\times \ldots \times Q_{n}|
            &= \prod_{i=1}^{n} |Q_i|
            \le \prod_{i=1}^{n} \maxrange^{\maxrange^{\,i-1}}
            \le \maxrange^{\,d^{n}}.
      \end{align*} 
      As we can find a solution to an LP (or decide that there is none) in polynomial time with respect to its size (that is, the number of variables plus constraints), the total running time of the algorithm is $n! (|\phi|^{\bigoh(1)}+\maxrange^{\,\bigoh(d^n)})$. 

      If one of the constructed LPs has a solution, let $\P$ be the probability distribution over $U$ described by the variables~$p_q$ in that solution.
      It is straight-forward to verify that $(\allV, \set{U}, \gF, \P)$ is a model for $\phi$, so accepting the instance is correct.
      For the other direction, suppose $\phi$ is a \yesinstance. Then there is a well-structured model $\gM = (\allV, \set{U}, \gF, \P)$ for some ordering of $\allV$ by \cref{lem:nice_solution}.
      Observe that the LP constructed for that ordering of the variables has a solution by setting $p_q \coloneqq \P(U = q)$ for each~$q\in \range(U)$, so the algorithm indeed accepts the instance.
\end{proof}

\section{Concluding Remarks}
While previous works have focused on mapping the complexity lower bounds for \textsc{Satisfiability} in  Pearl's Causal Hierarchy, the parameterized paradigm allows us to identify islands of tractability for the problem. Our contributions include not only these positive results, but also lower bounds which show that the obtained complexity classifications are tight. The presented findings open up several avenues for future work, such as:

\begin{itemize}
\item \textbf{The Impact of Marginalization.} 
As mentioned in Section~\ref{sec:intro}, recent works~\citep{ZanderBL23,DorflerZBL25} have proposed an enrichment of the expressivity matrix $M$ via the marginalization operator $\sum$. It would be interesting to explore possible extensions of our approaches to this setting---in particular, are these enriched fragments of the PCH tractable without bounding the nesting depth of $\sum$?
\item \textbf{Treewidth-Guided Linear Programming.} To the best of our knowledge, the proofs of Results \textbf{(1)} and \textbf{(2)} rely on an entirely novel approach to establishing parameterized tractability with respect to treewidth. 
Since this approach is tailored to problems that combine discrete structures (graphs) with non-discrete elements (e.g., probabilities), we would not be surprised to see further applications of the technique in the domains of artificial intelligence and machine learning.
\item \textbf{\XP-Algorithm for \cSAT{\counterfact}{\lin} parameterized by $n$ alone.} While our results paint an almost complete picture of the problem's parameterized complexity, they leave open the existence of a polynomial-time algorithm for \cSAT{\counterfact}{\lin} over constantly many variables.
\end{itemize}

\bibliographystyle{plainurl}
\bibliography{refs}

\end{document}